\def\L{\mathcal{L}}
\def\l{\ell}
\def\R{\mathcal{R}}
\def\S{\mathcal{S}}
\def\I{\mathcal{I}}
\title{Learning The Best Expert Efficiently }
\author{\name Daron Anderson \email andersd3@tcd.ie \\
       \addr Department of Computer Science and Statistics\\
       Trinity College Dublin\\
       Ireland
       \AND
       \name Douglas J. Leith \email doug.leith@tcd.ie \\
       \addr Department of Computer Science and Statistics\\
       Trinity College Dublin\\
       Ireland
       }
\begin{document}
\maketitle

\begin{abstract}
We consider online learning problems where the aim is to achieve regret which is efficient in the sense that it is the same order as the lowest regret amongst $K$ experts.  This is a substantially stronger requirement that achieving $O(\sqrt{n})$ or $O(\log n)$ regret with respect to the best expert and standard algorithms are insufficient, even in easy cases where the regrets of the available actions are very different from one another.  We show that a particular lazy form of the online subgradient algorithm can be used to achieve minimal regret in a number of ``easy'' regimes while retaining an $O(\sqrt{n})$ worst-case regret guarantee.   We also show that for certain classes of problem minimal regret strategies exist for some of the remaining ``hard'' regimes. 
\end{abstract}

\begin{keywords}
 Sequential decision making, regret minimisation, online convex optimisation
\end{keywords}

\section{Introduction}

We consider online convex optimisation in the \emph{efficient regret} setting.   By the efficient regret setting we mean that our task is to choose a sequence of actions such that the regret is of the same order as the lowest regret amongst $K$ experts.   So if, for example, the regret of the best expert is $O(1)$ then we want to actually achieve $O(1)$ regret.   This is, of course, much stronger than the usual requirement of $O(\sqrt{n})$ or $O(\log n)$ regret with respect to the best expert.

Our interest is motivated by applications such as the following.  Suppose a person has to make a choice each day, for example what time to leave for work in the morning.   Each day the person can use their insight,  e.g. gained from experience or information from friends, to propose a time.  The person is subject to behavioural biases as well as limited time and effort.   In addition, suppose a recommender system is available that each day proposes a time that comes with an $O(\sqrt{n})$ regret guarantee.   Our task each day is to decide between these two proposed times (or perhaps a combination of them) in such a way that the recommender provides a ``safety net''.  That is, if the person's proposed times have consistently lower regret than those proposed by the recommender then we want to achieve this lower regret.  But if the person's judgement is poor and the regret of their choices is greater than $O(\sqrt{n})$, then we want to fall back to the $O(\sqrt{n})$ regret of the recommender system.

Intuitively, there are two easy cases where we might reasonably hope to achieve efficient regret.  The first is where the difference in the regrets of the two experts is, in some appropriate sense, large.   For example, one expert has $\Theta(1)$ regret and the other $\Theta(\sqrt{n})$ regret.  Perhaps surprisingly, it is easy to come up with examples where standard online learning algorithms fail to achieve $O(1)$ regret in this case.   The second easy case is where both experts have similar regret, e.g. both have $\Theta(1)$ regret.  Unfortunately, again it is easy to come up with examples where standard algorithms fail to achieve $O(1)$ regret even in this case.   

In this paper we show that a particular form of the online subgradient algorithm, namely the Biased Lazy Subgraduent algorithm, can be used to achieve efficient regret in such easy cases while retaining an $O(\sqrt{n})$ worst-case regret guarantee.   This is not the standard greedy form of algorithm but rather a lazy subgradient method with varying step-size.   The remaining harder cases correspond to situations where there is no  consistent ordering of the regrets of the two experts or where the difference in their regrets is $\Theta(\log{n})$ or less.   We show that for certain classes of expert efficient regret strategies also exist for some of these harder cases.

\subsection{Related Work}

There are two main strands of related work.  The first, initiated by \cite{cesa-bianchi_improved_2007}, seeks better regret bounds in the low loss and i.i.d. stochastic regimes via second-order regret inequalities.   \cite{cesa-bianchi_improved_2007} derives two main types of second-order inequality.  One is of the form $\R_{n}\le \frac{\log K}{\eta}+\min_{k\in\{1,\dots,K\}} \eta\sum_{i=1}^n \l^2_{k,i}$ (translating to the loss setting), where $\R_{n}$ denotes the regret after $n$ steps, $K$ is the number of experts and $\l_{k,i}$ is the loss incurred by taking the action of expert $k$ at step $i$.   Since $\l^2_{k,i}<|\l_{k,i}|$ when the loss is small this improves on earlier bounds in the low loss regime.  The second type of inequality obtained is of the form $\R_{n}\le \sqrt{\log(K)\sum_{i=1}^n v_i}$ (again translating to the loss setting and also ignoring minor terms), where $v_{n}=\max_{i\le n} \min_{k\in\{1,\dots,K\}} \sum_{j=1}^i \l^2_{k,j}$ for the Prod algorithm and $v_i=\sum_{k=1}^Kp_{k,i}\l_{k,i}^2-(\sum_{k=1}^Kp_{k,i}\l_{k,i})^2$ for the Hedge algorithm with adaptive step size, where $p_{k,i}$ is the weight assigned to expert $k$ at step $i$.   \cite{gaillard_second-order_2014} build upon this to obtain regret inequalities of the form $\R_{n}\le \min_{k\in\{1,\dots,K\}} \sqrt{\log(K) \sum_{i=1}^n (\hat{\l}_i-\l_{k,i})^2}$ where $\hat{\l}_{i}=p_i^T\l_i$.  Using these they also obtain bounds for the low loss regime and also for i.i.d stochastic losses.  \cite{wintenberger_optimal_2017} and \cite{koolen_second-order_2015} take a different approach and obtain second order inequalities by modifying the Hedge algorithm to include a second order loss term.  A similar idea is also used by \cite{metagrad}.  

The low loss regime is not the same as the efficient regret regime, hence results for the low loss regime are of limited help in the efficient regret setting of interest in the present paper.  Second-order inequalities based on the deviation $\sum_{i=1}^n (\hat{\l}_i-\l_{k,i})^2$, or similar, can be expected to yield strong lower bounds when an algorithm quickly settles on a single expert.  Unfortunately, that leaves open the question of establishing conditions under which such rapid convergence takes place which, as we will see, turns out to be the key issue.

The second main strand of related work aims to construct so-called universal algorithms or algorithms achieving the ``best of both worlds''.  That is, a single algorithm that simultaneously achieves good regret in both the adversarial and stochastic settings, removing the need for prior knowledge of the setting when choosing the algorithm.   One strategy for achieving this is to 
start off using an algorithm suited to stochastic losses and then switch irreversibly to use of an adversarial algorithm if evidence accumulates that the stochastic assumption is false.  The other main strategy is to use reversible switches, with the decision as to which algorithm (or combination of algorithms) is used being updated in an online fashion.   One such strategy, the $(A,B)$-Prod algorithm introduced by \cite{ABprod}, is probably the closest approach in the literature to that considered in the present paper and is discussed in more detail in Section \ref{sec:prodhedge}.   Note that this work seeking universal algorithms by combining two specialised algorithms has perhaps been superceded by recent results showing that the Hedge and Subgradient algorithms with $\Theta(1/\sqrt{n})$ step size are in fact universal in this sense  (see \cite{mourtada_optimality_2019,anderson2019optimality}, respectively).

A related line of work uses the fact that popular algorithms such as Hedge can achieve good regret if the step size is tuned to the setting of interest, e.g. a step size of $\Theta(1/n)$ yields log regret for strongly convex losses.  The approach taken is therefore to try to learn the best step size in an online fashion.   See, for example, \cite{erven_adaptive_2011} and \cite{metagrad}.   

A third recent strand of related work addresses combining learning algorithms in the bandit setting.  \cite{agarwal_corralling_2017} and \cite{singla_learning_2018} consider combining time-varying experts with the aim of minimising regret with respect to the best constant action (referred to as ``competing with the best expert'').   Bandit setting aside, the setup is otherwise quite similar to that considered in the present paper.  The approach adopted is to manipulate the time-varying experts by adjusting in an online fashion the loss feedback provided to each expert.  Regret performance of $O(n^{2/3})$ is achieved when the best expert has $O(\sqrt{n})$ regret, and $O(\sqrt{n})$ when the best expert has $O(1)$ regret.

\section{Preliminaries}
We start with the usual online setup where at each step $i\in\{1,2,\dots\}$ we take action $y_i\in X\subset\mathbb{R}^m$, where $X$ is convex, closed and bounded, then observe vector $\l_i\in\mathbb{R}^m$ and suffer loss $\l_i^Ty_i$.  While we focus on linear losses $\l_i$ the extension to convex losses is immediate by the standard subgradient bounding method.   

Now suppose that at step $i$ we are restricted to choose amongst a set of $d$ actions $z_{k,i}\in X$, $k=1,2,\dots,d$.   For example, action $z_{1,i}$ may be proposed by a human and action $z_{2,i}$ by an opimisation algorithm.    That is, we are restricted to choosing a meta-action $x_i\in \S$, where $\S$ is the $d$-simplex, with meta-action $x_i\in \S$ corresponding to action $y_i=\sum_{k=1}^dz_{k,i}x_{k,i} \in X$, where $x_{k,i}$ denotes the $k$'th element of vector $x_i$.   Defining $b_i=(\l_i^Tz_{1,i}, \dots, \l_i^Tz_{d,i})$ then $b_i^Tx_i=\l_i^Ty_i$ and so the loss associated with meta-action $x_i$ is $b_i^Tx_i$. {For simplicity we assume all $\|b_i\| \le 1 $ where $\|\cdot\|$ is the Euclidean norm. The methods here immediately generalise to when we have a uniform bound $\|b_i\| \le L $ by a simple rescaling.}

The regret of a sequence of actions $y_i$, $i=1,\dots,n$ with respect to the best fixed action in $X$ is $\R_n=\sum_{i=1}^n \l_i^T \left(y_i- y^*\right)$, where $y^*\in\arg\min_{y\in X} \sum_{i=1}^n \l_i^T y$.   Substituting for $b_i$ and $x_i$ we have
\begin{align}
\R_n = \sum_{i=1}^n  \left(b_i^Tx_i- \l_i^Ty^*\right)\notag
\end{align}
We can also define the regret of $x_i$, $i=1,\dots,n$ with respect to the best fixed meta-action in $\S$, namely
\begin{align}
\tilde{\R}_n= \sum_{i=1}^n(b_i^Tx_i-b_i^Tx^*)\notag
\end{align} 
where $x^*\in\arg\min_{x\in \S} \sum_{i=1}^n b_i^T x$.   Since $\min_{x\in \S} \sum_{i=1}^n b_i^T x$ is a linear programme $x^*$ is an extreme point of the simplex.   That is, $x^*=e_{k^*}$ where $k^*\in\arg\min_{k\in\{1,\dots,d\}} \sum_{i=1}^n b_i^T e_k$ and $e_k$ denotes the unit vector with all elements zero apart from the $k$'th element which is equal to one.

Observe that in general $\R_n\ne \tilde{\R}_n$.  Indeed,
\begin{align}
\R_n=\sum_{i=1}^n \left(b_i^Tx^* - \l_i^Ty^*\right) + \sum_{i=1}^n b_i^T(x_i-x^*)\notag
\stackrel{(a)}{=} \min\{\R_{1,n},\dots,\R_{d,n}\}+ \tilde{\R}_n 
\end{align}
where $\R_{k,n}=\sum_{i=1}^n \left(\l_i^Tz_{k,i} - \l_i^Ty^*\right)=\sum_{i=1}^n(b_i^Te_k- \l_i^Ty^*)$ is the regret of the $k$'th expert and equality $(a)$ follows from the fact that 
\begin{align*}
x^*\in\arg\min_{x\in\S} \sum_{i=1}^n b_i^T x=\arg\min_{k\in\{1,\dots,d\}} \sum_{i=1}^n (b_i^T e_k- \l_i^Ty^*)
\end{align*} 
since $\sum_{i=1}^n\l_i^Ty^*$ is a constant that does  not depend on $x$. Our interest is in selecting a sequence $x_i$ such that ${\R}_n$ has order no greater than $\min\{\R_{1,n},\dots,\R_{d,n}\}$ i.e. ${\R}_n/\min\{\R_{1,n},\dots,\R_{d,n}\}$ is $O(1)$.  We refer to sequences with this property as having \emph{efficient regret}, or in short as being \emph{efficient}.   

\begin{figure}
\centering
\subfigure[Hedge]{
\includegraphics[width=0.45\columnwidth]{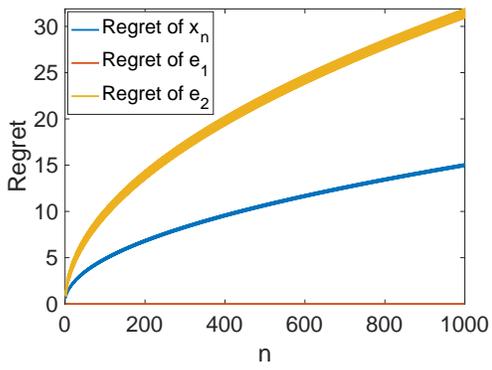}
}
\subfigure[Hedge]{
\includegraphics[width=0.45\columnwidth]{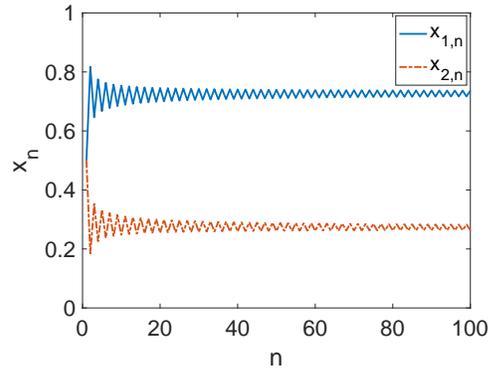}
}
\subfigure[Greedy Subgradient]{
\includegraphics[width=0.45\columnwidth]{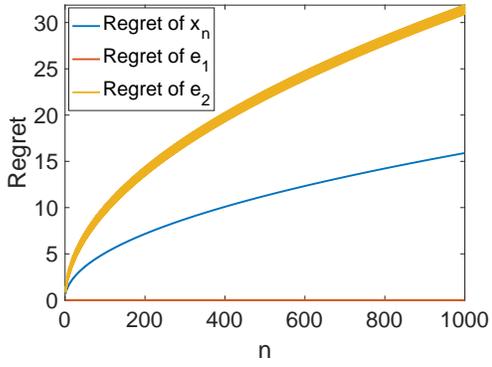}
}
\subfigure[Greedy Subgradient]{
\includegraphics[width=0.45\columnwidth]{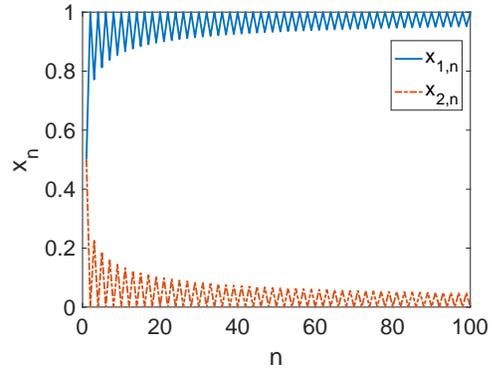}
}
\caption{Performance of the Hedge and Greedy Subgradient algorithms in Example \ref{ex1}.  }\label{fig:ex1_a}
\end{figure}

Importantly, it is easy to verify that common online learning algorithms do not generate sequences with this property, as the following example illustrates.

\begin{example}\label{ex1}
Suppose loss vector $\l_i=(\l_{1,i},\l_{2,i})$ with $\l_{1,i}=(-1)^i$, i.e. sequence $-1$, $+1$, $-1$, $+1$, $\dots$, and $\l_{2,i}=1/(2\sqrt{i})$.   Suppose also we are to choose between $d=2$ fixed actions $z_{1,i}=e_1=(1,0)$ and $z_{2,i}=e_2=(0,1)$, and that $y^*=(1,0)$.    Then $\R_{1,n}=0$ and $\R_{2,n}$ is $\Theta(\sqrt{n})$.   Figures \ref{fig:ex1_a}(a)-(b) show the regret ${\R}_{n}$ when using the Hedge algorithm\footnote{$x_{k,i+1}=w_{k,i}/\sum_{k=1}^d w_{k,i}$ ,$w_{k,i}=e^{-\sum_{j=1}^i\l_{k,j}/\sqrt{i}}$ for $k\in\{1,2\}$.} and Figures \ref{fig:ex1_a}(c)-(d) when using the Greedy Subgradient algorithm\footnote{$x_{i+1}=P_{\S}(x_{i}-\l_{i}/\sqrt{i})$ where $P_{\S}$ denotes the Euclidean projection onto the simplex.}.   Despite the simplicity of the choice to be made in this example it can be seen that the regret ${\R}_{n}$ of both algorithms is $\Theta(\sqrt{n})$, whereas $\min\{\R_{1,n},\R_{2,n}\}=0$.   It can be verified that for both algorithms similar behaviour is observed with constant $\sqrt{n}$ stepsize, and also with the Prod algorithm\footnote{$x_{k,i+1}=w_{k,i}/\sum_{k=1}^d w_{k,i}$, $w_{k,i+1}=w_{k,i}(1-\l_{k,i}/\sqrt{n})$ for $k\in\{1,2\}$.}.
\end{example}

The difficulty here arises because the algorithms do not settle on the best expert $z_{1,i}$, but rather oscillate about a mixture of the actions propsed by the two experts.  Due to the $\Theta(\sqrt{n})$ loss of $z_{2,i}$, such a mixture is liable to have regret $\Theta(\sqrt{n})$ rather than the desired $O(1)$.

\section{Gap Property of the Lazy Subgradient Method}\label{sec:gap}

The lazy subgradient method selects $x_i$ according to,
\begin{align}
x_{i}=P_{\S}\left(-\alpha_i\sum_{j=1}^{i-1}b_j\right)\label{eq:lazy}
\end{align}
for step size $\alpha_i>0$ and $P_{\S}$ is the Euclidean projection onto $d$-simplex $\S$.    Recently, \cite[Lemma 2]{anderson2019optimality} established the following property of the Euclidean projection,
\begin{lemma}[\cite{anderson2019optimality}]\label{lem:one}
Suppose $w\in\mathbb{R}^d$ has two coordinates $k,l$ with $w_k-w_l\ge 1$.  Then $P_{\S}(w)$ has $l$-coordinate zero.
\end{lemma}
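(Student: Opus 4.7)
The plan is to use the explicit formula for the Euclidean projection onto the probability simplex. A standard fact (derivable via Lagrange multipliers / KKT conditions applied to $\min_{p\in\S}\tfrac{1}{2}\|p-w\|^2$) is that $P_{\S}(w)_i = \max(w_i - \tau,\, 0)$ for a unique threshold $\tau \in \mathbb{R}$ chosen so that $\sum_{i=1}^d \max(w_i - \tau,0) = 1$. I would cite or briefly derive this formula, since the rest of the argument just reads off the $k$-th and $l$-th coordinates of $P_{\S}(w)$ from it.

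Given the formula, I would argue by contradiction. Suppose the $l$-coordinate of $P_{\S}(w)$ is positive; then by the formula $w_l - \tau > 0$. Using the hypothesis $w_k \ge w_l + 1$, this forces $w_k - \tau \ge (w_l - \tau) + 1 > 1 > 0$, so the $k$-coordinate of $P_{\S}(w)$ equals $w_k - \tau > 1$. But every coordinate of a point in the simplex $\S$ is at most $1$, contradiction. Hence the $l$-coordinate of $P_{\S}(w)$ must be zero.

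The only potentially delicate point is justifying the threshold representation of the projection; this is where I would be most careful, but it is a standard computation and well-documented in the literature (e.g.\ Held, Wolfe, and Crowder; Duchi et al.). Everything after that is a one-line inequality chase.
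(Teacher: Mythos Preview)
Your argument is correct: the soft-thresholding representation $P_{\S}(w)_i=\max(w_i-\tau,0)$ with $\sum_i\max(w_i-\tau,0)=1$ is the standard KKT characterisation of the Euclidean projection onto the simplex, and from it your contradiction is a clean two-line computation. The only point to polish is the justification of that representation; a brief derivation via the Lagrangian (or a citation to the sources you mention) would make the write-up self-contained.

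As for comparison with the paper: there is nothing to compare. The paper does not supply its own proof of this lemma; it simply quotes the statement from \cite{anderson2019optimality} and uses it as a black box. So your proposal is not an alternative route but rather a proof where the paper offers none, and it is exactly the natural one.
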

\noindent Figure \ref{fig:simplex} illustrates Lemma \ref{lem:one} for $d=2$ dimensions.  Points lying in the region between the two normals are projected onto the interior of the simplex.   All other points are projected onto the closest extreme point, e.g. point $x$ in Figure \ref{fig:simplex}.   Lemma \ref{lem:one} characterises such points.

\begin{figure}
\centering
\includegraphics[width=0.35\columnwidth]{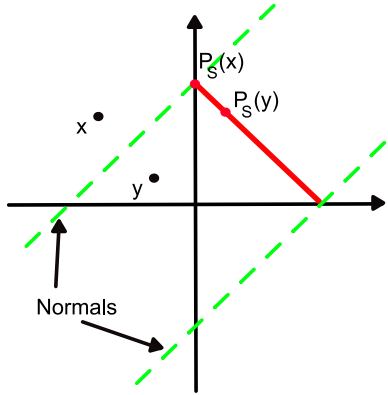}
\caption{Illustrating Lemma \ref{lem:one} on the plane.  The simplex is indicated by the solid line segment, and normals to the two extreme points of the simplex are indicated by the dashed lines.  Points lying above the upper normal or below the lower one are projected onto the corresponding extreme point, e.g. the projection $P_{\S}(x)$ of point $x$ is point (0,1).  Points lying between the normals are projected onto the interior of the simplex, e.g. point $y$. }\label{fig:simplex}
\end{figure}


Applying Lemma \ref{lem:one} to the lazy subgradient method (\ref{eq:lazy}) we immediately have the following result,
\begin{lemma}[Subgradient Gap]\label{lem:two}
Let $k^*\in\arg \min\{\R_{1,n},\dots,\R_{d,n}\}$.   Suppose $\R_{k,n}- \R_{k^*,n}\ge 1/\alpha_n$, $k\in\{1,\dots,d\}\setminus\{k^*\}$ for all $n\ge n_0$ and that {$\|b_i\|_\infty\le 1$.}  That is, the gap between the regret of the best expert $k^*$ and the other experts is at least $1/\alpha_n$.  Then the regret $\R_n$ of the subgradient update (\ref{eq:lazy}) {satisfies $\R_n\le \R_{k^*,n}+ \max\{1,n_0\}$. }
\end{lemma}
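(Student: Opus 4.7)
The plan is to use the decomposition $\R_n = \R_{k^*,n} + \tilde{\R}_n$ already established in the preliminaries, where $\tilde{\R}_n = \sum_{i=1}^n b_i^T(x_i - e_{k^*})$. The claim then reduces to bounding $\tilde{\R}_n \le \max\{1, n_0\}$, and the heart of the argument is to show that the lazy update (\ref{eq:lazy}) commits to the exact vertex $e_{k^*}$ at every step past (roughly) $n_0$, so the instantaneous regret vanishes there and only the initial segment contributes.

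Write $w_i = -\alpha_i \sum_{j=1}^{i-1} b_j$ so that $x_i = P_{\S}(w_i)$. For any $k\ne k^*$, the $(k^*,k)$ coordinate gap of $w_i$ rearranges, using $\R_{k,i-1} - \R_{k^*,i-1} = \sum_{j=1}^{i-1}(b_{j,k}-b_{j,k^*})$, into
\begin{align*}
w_{i,k^*} - w_{i,k} \;=\; \alpha_i\,\bigl(\R_{k,i-1} - \R_{k^*,i-1}\bigr).
\end{align*}
This is exactly the quantity the hypothesis is engineered to control: for $i-1\ge n_0$ the gap hypothesis together with $\|b_i\|_\infty\le 1$ forces the right-hand side to be at least $1$. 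Invoking Lemma \ref{lem:one} for each $k\ne k^*$ then kills the corresponding coordinate of $P_{\S}(w_i)$, and since $x_i\in\S$ this leaves $x_i = e_{k^*}$, whence $b_i^T(x_i-e_{k^*})=0$ at every such step.

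For the initial segment $i\le n_0$ there is no control on $x_i$, but since $x_i$ and $e_{k^*}$ both lie in the simplex and $\|b_i\|_\infty\le 1$, the per-step quantity $b_i^T(x_i-e_{k^*})$ is bounded by $\max_k b_{i,k}-\min_k b_{i,k}\le 1$; summing gives the $O(n_0)$ contribution that is absorbed by $\max\{1,n_0\}$, the outer $1$ covering the edge case $n_0=0$. The main obstacle I expect is the off-by-one between the $\alpha_i$ appearing in the update and the $\alpha_{i-1}$ available from the hypothesis at step $i-1$: the cleanest fix is to absorb a single step's worth of slack (bounded by $2\|b_i\|_\infty$) into the additive constant, or alternatively to assume mild monotonicity of $\alpha_i$. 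Neither is deep, but it is the only place where care beyond a direct invocation of Lemma \ref{lem:one} is needed.
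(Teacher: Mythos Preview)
Your proposal is correct and follows essentially the same route as the paper: translate the regret gap into a coordinate gap of $w_i=-\alpha_i\sum_{j<i}b_j$, apply Lemma~\ref{lem:one} to force $x_i=e_{k^*}$ beyond $n_0$, and bound the initial segment trivially. The paper's own proof is actually less careful than yours about the $\alpha_i$ versus $\alpha_{i-1}$ off-by-one you flag (it writes $w=-\alpha_n\sum_{i=1}^n b_i$ and silently identifies this with the update input), so your caveat is well placed rather than a defect. One small slip you share with the paper: under $\|b_i\|_\infty\le 1$ the per-step bound $\max_k b_{i,k}-\min_k b_{i,k}$ is at most $2$, not $1$, so strictly the initial-segment contribution is $2\max\{1,n_0\}$; this does not affect the argument's structure.
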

\begin{proof}
Begin by observing that
\begin{align}
\R_{k,n}-\R_{k^*,n}=\sum_{i=1}^n \l_i^T \left(z_{k,i}- y^*\right)-\sum_{i=1}^n \l_i^T \left(z_{k^*,i}- y^*\right)=\sum_{i=1}^n \l_i^T \left(z_{k,i}- z_{k^*,i}\right)=\L_{l,n}-\L_{k^*,n}\notag
\end{align}
and so $\R_{k,n}- \R_{k^*,n}\ge 1/\alpha_n$ implies $\L_{k,n}- \L_{k^*,n}\ge 1/\alpha_n$, where $\L_{k,n}=\sum_{i=1}^n\l_i^Tz_{k,i}=\sum_{i=1}^n b_i^Te_k$, $k=1,\dots,d$ is the cumulative loss incurred by the $k$'th expert $z_{k,i}$.   Without loss of generality let $k^*=1$ since we can always permute the experts so that this holds.  Observe that $\L_{k,n}\ge \L_{1,n}+1/\alpha_n$ implies $\sum_{i=1}^n b_i (e_k - e_1) =-\sum_{i=1}^n b_i (e_1 - e_k)\ge1/\alpha_n$.   Letting $w$ be the vector $w=-\alpha_n\sum_{i=1}^n b_i^T$, then $w_1-w_k = -\alpha_n\sum_{i=1}^nb_i(e_1-e_k) \ge 1 $.  By Lemma \ref{lem:one} it follows that $P_{\S}(w)$ has $k$ coordinate zero.  Since by assumption this holds for all $k\ge 2$ then only the first coordinate of $P_{\S}(w)$ is non-zero for $n\ge n_0$ i.e. action $z_{1,i}$ is applied for $n\ge \max\{1,n_0\}$, where we need to take the max of $n_0$ and 1 since projection $P_{\S}$ is only used to select $x_i$ from step $i=2$ onwards and the initial $x_1$ is arbitrary.  The regret $\R_n=\sum_{i=1}^{n} \l_i^T \left(z_{1,i}- y^*\right)+\sum_{i=1}^{n_0} \l_i^T \left(y_i-z_{1,i}\right)=\R_{1,n}+\sum_{i=1}^{\max\{1,n_0\}} b_i^T \left(x_i-e_1\right)$.  Since $x_i$, $e_i$ lie in the simplex the last term is upper bounded by $\max\{1,n_0\}$.
\end{proof}
\noindent Note that we can easily tighten up this bound to replace {the $\max\{1,n_0\}$ term with an $O(\sqrt{n_0})$ }one via the usual worst-case bound on the regret of the subgradient method over the first $n_0$ steps.   

Revisiting Example \ref{ex1} in light of Lemma \ref{lem:two}, it can be verified that $\R_{2,n}- \R_{1,n}\ge 0.5\sqrt{n}$ and so Lemma \ref{lem:two} holds with $n_0=0$ and $\alpha_n=2/\sqrt{n}$.  Hence, subgradient update $(\ref{eq:lazy})$ with step size $\alpha_n=2/\sqrt{n}$ yields regret {$\R_n \le \R_{1,n}+1$ i.e.} regret of the same order as the regret of the best expert, as desired.    See Figure \ref{fig:ex1_fixed}.

\begin{figure}
\centering
\subfigure[]{
\includegraphics[width=0.45\columnwidth]{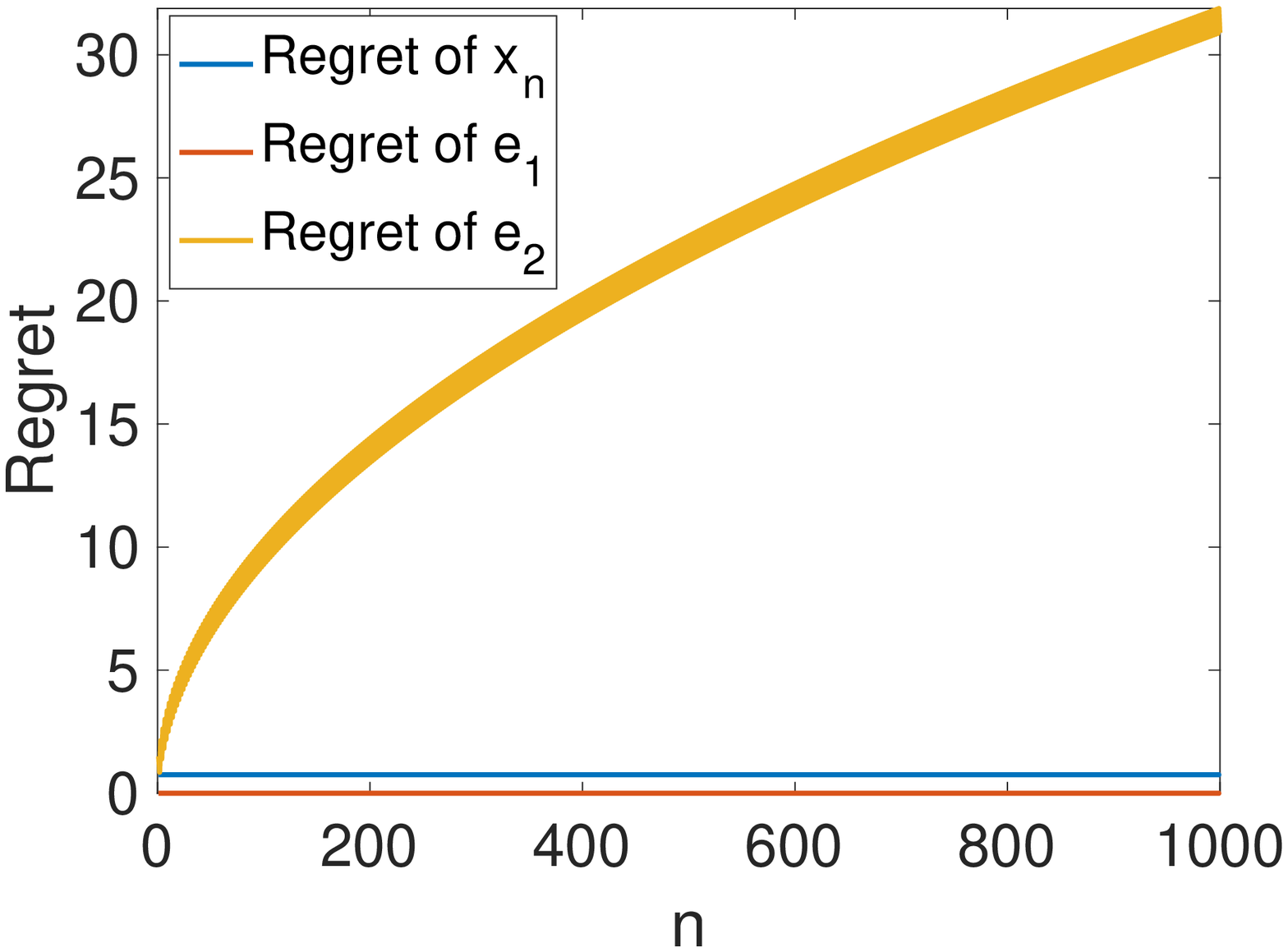}
}
\subfigure[]{
\includegraphics[width=0.45\columnwidth]{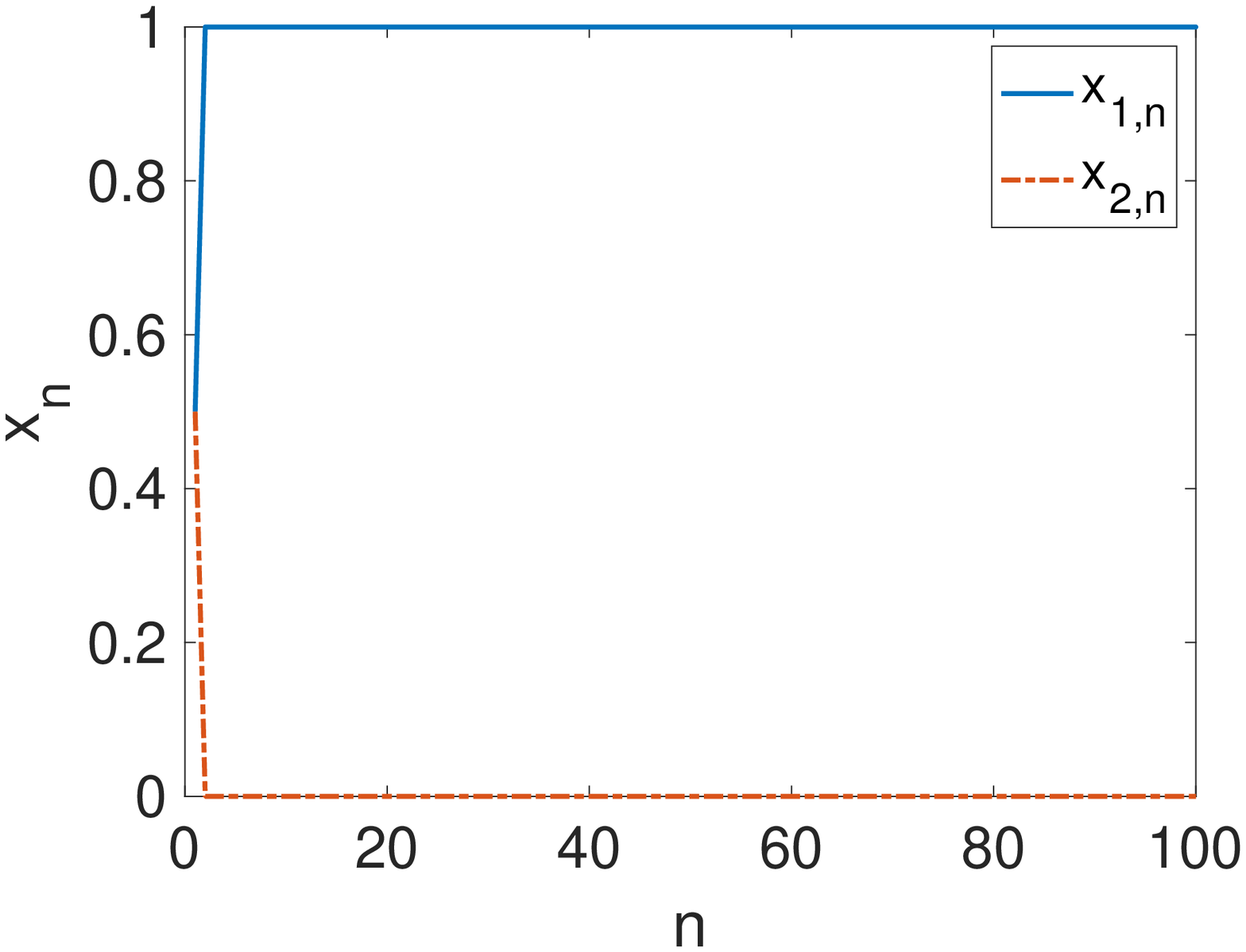}
}
\caption{Performance of the Lazy Subgradient algorithm in Example \ref{ex1} (with step size $\alpha_i=2/\sqrt{i}$). }\label{fig:ex1_fixed}
\end{figure}

More generally, Lemma \ref{lem:two} defines a class of ``easy'' cases where the regret of the best expert is sufficiently distinct from the other experts in the sense that they differ by at least $1/\alpha_n$.   For these easy cases the lazy subgradient method achieves efficient regret.  Typically we need to choose the step size $\alpha_n$ proportional to $1/\sqrt{n}$ in order to ensure good worst case performance in which case we need the gap between regrets to be proportional to $1/\sqrt{n}$ in order to apply Lemma \ref{lem:two}.

Another ``easy'' case where we might reasonably expect a learning algorithm to have efficient regret is when  all of the experts have similar regret.    Unfortunately it is not hard to devise examples where the subgradient method (\ref{eq:lazy}) yields $\Theta(\sqrt{n})$ regret even though the regrets of the individual experts are all $O(1)$, as the following example illustrates.
\begin{example}\label{ex2}
Suppose $\l_{1,i}=(-1)^{i+1}$, i.e. sequence $+1,-1,+1,-1,+1,\dots$, and $\l_{2,i}=(-1)^i$, i.e. sequence $-1,+1,-1,+1,-1,\dots$.   Suppose $d=2$, $z_{1,i}=(1,0)$ and $z_{2,i}=(0,1)$ and that $y^*=(0,1)$.   Since $-1\le\sum_{i=1}^n \l_{k,i} \le 1$ for $k=1,2$ the regret of both experts is $O(1)$.  Figure \ref{fig:ex2}(a) shows the regret when these experts are combined using the subgradient method.  It can be seen that the regret grows as $\Theta(\sqrt{n})$.    Figure \ref{fig:ex2}(b) plots $x_n$ vs time.  It can be seen that the action oscillates about the $(0.5,0.5)$ point.   The difficulty arises because the sign differences between $\l_{1,i}$ and $\l_{2,i}$ mean that such oscillations can yield larger cumulative loss than any fixed combination of $\l_{1,i}$ and $\l_{2,i}$.   
\end{example}
%
%
\begin{figure}
\centering
\subfigure[]{
\includegraphics[width=0.45\columnwidth]{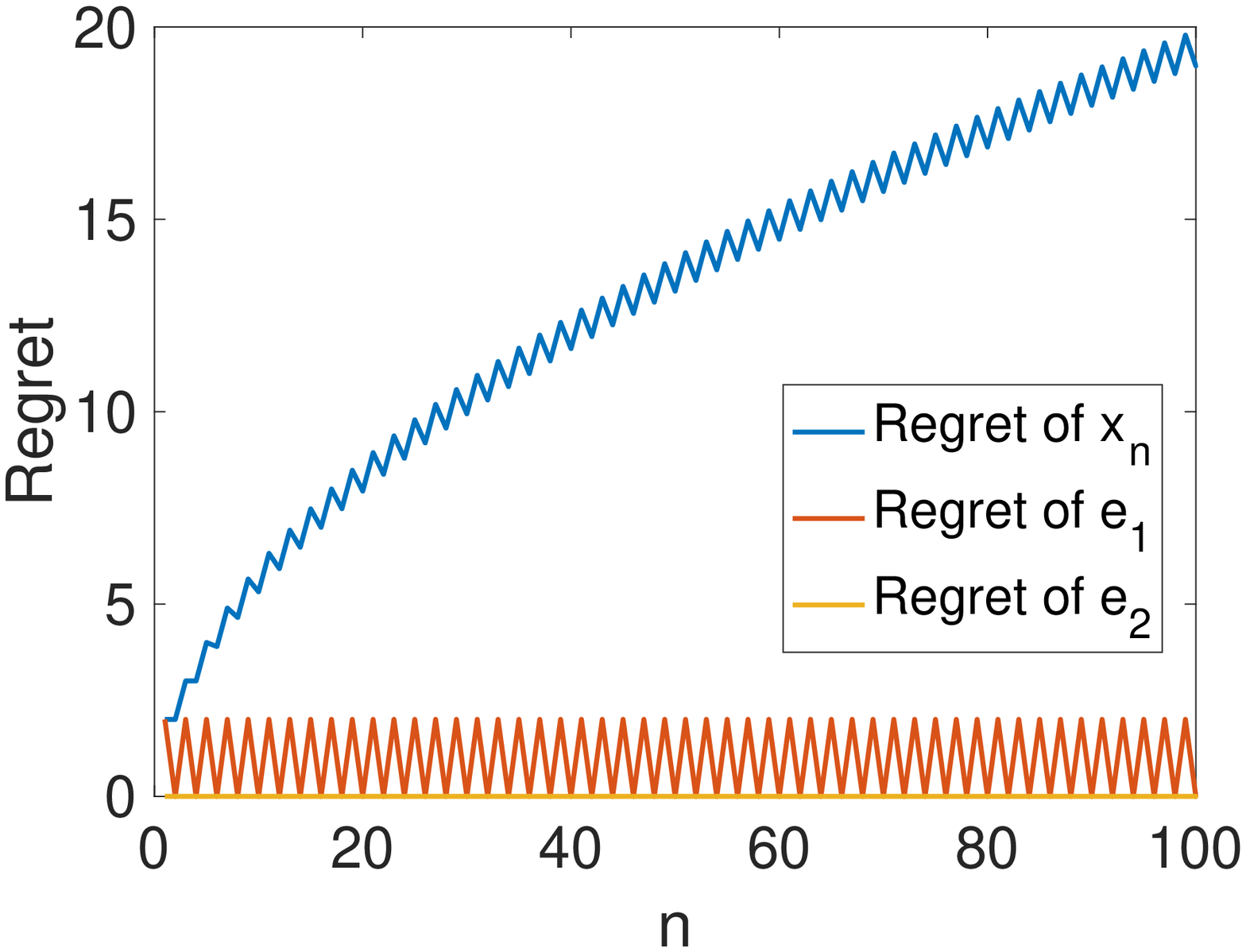}
}
\subfigure[]{
\includegraphics[width=0.45\columnwidth]{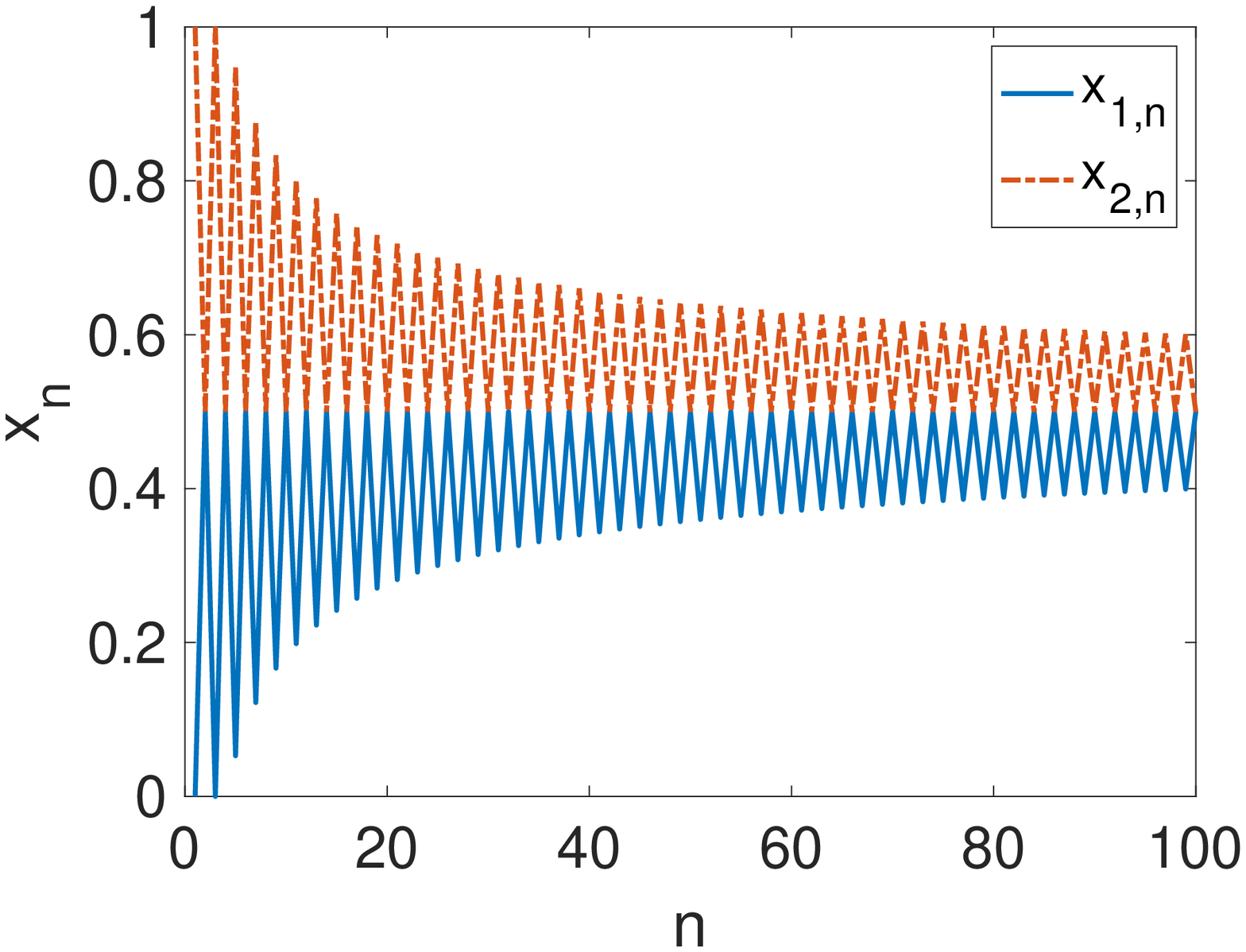}
}
\caption{Example \ref{ex2} where individual experts have regret upper bounded by a constant but when combined using subgradient method the resulting actions have $\Theta(\sqrt{n})$ regret.  Left hand plot shows the regret of the combined action $x_n$ taken by the subgradient method and also the regret of experts 1 and 2 (regret shown is with respect to expert 2 but it also $O(1)$ wrt algorithm 2, or any fixed combination of the two).  Right hand plots action $x_n$ taken by subgradient vs time.}\label{fig:ex2}
\end{figure}

\section{Biased Lazy Subgradient Method}\label{sec:bias}
\subsection{Learning the Best of Two Experts}
It turns out that it is indeed possible to use the Lazy Subgradient method to achieve efficient regret both when the gap condition in Lemma \ref{lem:two} holds and when the difference between the regrets of the available experts is small.   However, this requires biasing the loss sequence to which the subgradient method is applied.   We begin by considering the case of $d=2$ experts and at step $i$ selecting,
\begin{align}
x_i=P_{\S}(-\alpha_{i-1}(A_{i-1},B_{i-1})^T)=P_{\S}(-\alpha_{i-1}\sum_{j=1}^{i-1}(a_j,b_j)^T) \label{eq:alg2a_subgrad}
\end{align}
where $A_i,B_i\in\mathbb{R}$, $a_i=A_i-A_{i-1}$ with $a_1=A_1$, $b_i=B_i-B_{i-1}$ with $b_1=B_1$.  From now on we fix parameter $\alpha_i=\sfrac{1}{\sqrt{i}}$.  Observe that this is just the Lazy Subgradient update applied to the sequence of vectors $(a_i,b_i)$, $i=1,2,\dots$.  We have in mind selecting $B_i=\R_{2,i}-\R_{1,i}=\sum_{j=1}^{i}\l_j^T(z_{2,j}-z_{1,j})$ and using $A_i$ as benchmark against which to compare $B_i$.  

We can rewrite this update equivalently as,
\begin{align}
x_{1,i}=P_\I(\tilde{A}_i+\frac{1}{\sqrt{i-1}}B_{i-1}),\ x_{2,i}=1-x_{1,i} \label{eq:alg2a}
\end{align}
where $\I$ is the interval $[0,1]$ and bias $\tilde{A}_i=\sfrac{1}{2}-{A_{i-1}}/{\sqrt{i-1}}$.  To see this observe that $P_\S(w)=\arg\min_{x\in\S} \|w-x\|_2$ with $\S=\{(x_1,x_2): x_1+x_2=1, x_1,x_2\ge 0\}=\{(x_1,x_2): x_1=(\sfrac{1}{2}+\tilde{q}), x_2=(\sfrac{1}{2}-\tilde{q}), \tilde{q}\in[-\sfrac{1}{2},\sfrac{1}{2}]\}$.  Hence, $$P_\S(w)=\arg\min_{\tilde{q}\in[-\frac{1}{2},\frac{1}{2}]} \sqrt{(w_1-(\frac{1}{2}+\tilde{q}))^2+(w_2-(\frac{1}{2}-\tilde{q}))^2}=\arg\min_{\tilde{q}\in[-\frac{1}{2},\frac{1}{2}]}|w_1-w_2-\tilde{q}|$$ (expanding the square and dropping constant terms).   Changing variables to $x_1=(\sfrac{1}{2}+\tilde{q})$ now yields (\ref{eq:alg2a}).

When written in the form (\ref{eq:alg2a}) it can be seen that when $B_{i}/\sqrt{i}\le-\tilde{A}_i$ then $x_{1,i}=0$ and when $B_{i}/\sqrt{i}\ge1-\tilde{A}_i$ then $x_{1,i}=1$.   Hence, when $B_i=\R_{2,i}-\R_{1,i}$ then $x_{1,i}=0$ (thus $x_{2,i}=1$) when $\R_{2,i}\le\R_{1,i}-\sqrt{i}\tilde{A}_i$ and $x_{1,i}=1$ when $\R_{2,i}\ge\R_{1,i}+\sqrt{i}(1-\tilde{A}_i)$.  That is, we retain a gap property similar to that discussed in Section \ref{sec:gap}, with the gap now tunable by adjusting $\tilde{A}_i$.  Hence, update (\ref{eq:alg2a}) continues to achieve efficient regret in the easy case where there is a large gap between the regrets of the available experts.

Secondly, when $B_i$ is less than $\Theta(\sqrt{i})$ we have that $-\alpha_iB_i$ converges to the origin and $x_{1,i}=P_\I(\tilde{A}_i)$.   Hence, when $B_i=\R_{2,i}-\R_{1,i}$ then we can use $\tilde{A}_i$ to control the action taken when the difference in the regrets of the two experts is small.   In particular, when $\tilde{A}_i>1$ then $|\alpha_i B_n|\le \tilde{A}_i-1$ ensures $x_{1,i}=1$ i.e. we default to use of expert 1 when the difference in regrets is small.   Hence, unlike the original lazy subgradient update in Section \ref{sec:gap} the biased update (\ref{eq:alg2a}) also achieves efficient regret in the second easy case where the available experts have similar regrets.

We formalise these observations in the following lemma,
\begin{lemma}[Equilibrium Points]\label{lem:equil} 
Under update (\ref{eq:alg2a}), when either  $B_n \ge A_n+\sqrt{n}/2$ or  $|B_n|\le -(A_n+\sqrt{n}/2)$ for all $n\ge n_0$ then $x_{i}=(1,0)$ for all $n\ge n_0$.  When $B_n\le A_n-\sqrt{n}/2$ for all $n\ge n_0$ then $x_{i}=(0,1)$ for all $n\ge n_0$.
\end{lemma}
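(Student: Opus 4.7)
My plan is to reduce the lemma to a short computation involving only the scalar projection onto $[0,1]$. Starting from the equivalent form (\ref{eq:alg2a}) and substituting the definition $\tilde{A}_i=\sfrac{1}{2}-A_{i-1}/\sqrt{i-1}$, I collapse the argument of $P_\I$ to
\[
x_{1,i}=P_\I\!\left(\frac{1}{2}+\frac{B_{i-1}-A_{i-1}}{\sqrt{i-1}}\right).
\]
Since $P_\I$ sends any argument that is $\ge 1$ to $1$ and any argument that is $\le 0$ to $0$, this immediately furnishes the two saturation conditions I will need, namely $x_{1,i}=1$ exactly when $B_{i-1}\ge A_{i-1}+\sqrt{i-1}/2$ and $x_{1,i}=0$ exactly when $B_{i-1}\le A_{i-1}-\sqrt{i-1}/2$.

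With that characterisation in hand I would dispatch the three hypotheses case by case. The first, $B_n\ge A_n+\sqrt{n}/2$ for all $n\ge n_0$, is exactly the upper saturation condition, so $x_i=(1,0)$. The third, $B_n\le A_n-\sqrt{n}/2$ for all $n\ge n_0$, is exactly the lower saturation condition, giving $x_i=(0,1)$. For the middle hypothesis $|B_n|\le -(A_n+\sqrt{n}/2)$, I would first note that the right-hand side must be nonnegative, so the inequality unpacks into the chain $A_n+\sqrt{n}/2\le B_n\le -(A_n+\sqrt{n}/2)$; the lower half of this chain is precisely the first hypothesis and hence already forces $x_i=(1,0)$, so the middle case reduces to the first.

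I do not expect any genuine obstacle in this proof: every step is a direct unwinding of definitions, and the projection onto the interval behaves like a hard clipper. The only mild subtlety is parsing the second hypothesis, where one has to notice that $|B_n|\le -(A_n+\sqrt{n}/2)$ is vacuous unless $A_n\le -\sqrt{n}/2$ and that its lower half alone already supplies the gap condition demanded by the first case; once this observation is made, all three conclusions follow in a single line each from the clamping property.
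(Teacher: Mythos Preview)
Your proposal is correct and matches the paper's approach. The paper does not give a separate formal proof of Lemma~\ref{lem:equil}; instead it treats the lemma as an immediate consequence of the discussion preceding it, namely the observation that with $\tilde{A}_i=\tfrac{1}{2}-A_{i-1}/\sqrt{i-1}$ the update (\ref{eq:alg2a}) saturates at $x_{1,i}=1$ when $B_{i-1}/\sqrt{i-1}\ge 1-\tilde{A}_i$ and at $x_{1,i}=0$ when $B_{i-1}/\sqrt{i-1}\le -\tilde{A}_i$, together with the remark that when $\tilde{A}_i>1$ the condition $|B_{i-1}|/\sqrt{i-1}\le \tilde{A}_i-1$ also forces $x_{1,i}=1$. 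Your write-up is simply a clean reformulation of exactly these clipping thresholds, and your handling of the middle hypothesis (noting that $|B_n|\le -(A_n+\sqrt{n}/2)$ forces $A_n+\sqrt{n}/2\le 0$ and hence already implies the first hypothesis $B_n\ge A_n+\sqrt{n}/2$) is precisely the ``$\tilde{A}_i>1$'' observation in the paper's prose.
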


We now establish the worst-case performance of update (\ref{eq:alg2a}).


{
\begin{lemma}[FTL]\label{lem:ftrl} 
Under update (\ref{eq:alg2a}) we have for each $w\in [0,1]$ the inequality $$\sum_{i=1}^{n}b_i(x_{2,i}-w)\le  3\sqrt{n}   + 2\sum_{i=1}^{n} |a_{i}|$$ 
\end{lemma}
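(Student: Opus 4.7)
The strategy will be to recognise update~(\ref{eq:alg2a}) as the lazy subgradient iterate
\[
x_i = P_{\S}\bigl(-\alpha_{i-1}(A_{i-1},B_{i-1})^T\bigr)
\]
of form~(\ref{eq:alg2a_subgrad}) -- that is, lazy FTRL on the $2$-simplex $\S$ with linear losses $\ell_i=(a_i,b_i)$ and quadratic regulariser of strength $1/\alpha_{i-1}=\sqrt{i-1}$. I will then invoke the classical FTRL regret inequality (for lazy OGD with time-varying step size): for any $u\in\S$,
\[
\sum_{i=1}^n\langle \ell_i,x_i-u\rangle \;\le\; \frac{\|u-x_1\|^2}{2\alpha_n} \;+\; \frac{1}{2}\sum_{i=1}^n \alpha_{i-1}\|\ell_i\|^2,
\]
and specialise it to a carefully chosen comparator.

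The key step is to take $u=(1-w,w)\in\S$, which uses both coordinates. Since $x_{1,i}+x_{2,i}=1$, a direct expansion gives the algebraic identity
\[
\bigl\langle(a_i,b_i),(x_{1,i},x_{2,i})-(1-w,w)\bigr\rangle = (b_i-a_i)(x_{2,i}-w),
\]
which collapses the two-dimensional inner product onto the one-dimensional quantity of interest. To recover $\sum_i b_i(x_{2,i}-w)$ from this I then add the $a_i$ piece back, writing
\[
\sum_{i=1}^n b_i(x_{2,i}-w) = \sum_{i=1}^n (b_i-a_i)(x_{2,i}-w) + \sum_{i=1}^n a_i(x_{2,i}-w),
\]
and bounding the last sum by $\sum_i|a_i|$ via the trivial estimate $|x_{2,i}-w|\le 1$.

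To control the FTRL right-hand side, taking the natural initial point $x_1=(\tfrac12,\tfrac12)$ yields $\|u-x_1\|^2\le 1/2$, so the regulariser term is at most $\sqrt{n}/4$ once $\alpha_n=1/\sqrt{n}$ is substituted. The instability $\tfrac12\sum_i\alpha_{i-1}\|\ell_i\|^2$ will be handled by the blanket assumption $\|\ell_i\|\le 1$ from the preliminaries together with $\sum_i 1/\sqrt{i-1}=O(\sqrt{n})$, contributing another $O(\sqrt{n})$. Summing and absorbing the resulting constants into the looser prefactors stated in the lemma yields the claimed $3\sqrt{n}+2\sum_i|a_i|$.

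The main obstacle I expect is keeping the $|a_i|$ dependence linear rather than quadratic: a naive FTRL bound produces an $\alpha_{i-1}\|\ell_i\|^2$ term that mixes $a_i^2$ and $b_i^2$, and obtaining $O(\sum_i|a_i|)$ rather than $O(\sum_i a_i^2)$ requires the boundedness $|a_i|\le 1$ (so that $a_i^2\le|a_i|$), or equivalently a sharper use of the $(b_i-a_i)$ identity to park the $a_i$-dependence on the cleanly-bounded second sum. A secondary technicality is the boundary behaviour at $i=1$, where $\alpha_0$ or $\sqrt{0}$ formally appears; this is absorbed into the slack in the pre-factors.
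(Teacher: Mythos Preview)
Your overall decomposition matches the paper's exactly: apply an FTRL regret bound to the sequence $(a_i,b_i)$ against the comparator $(1-w,w)\in\S$, use the simplex identity to collapse the inner product, and then isolate $\sum_i b_i(x_{2,i}-w)$ by moving the $a_i$-contribution to the right and bounding $|\sum_i a_i(x_{1,i}-(1-w))|\le\sum_i|a_i|$. The gap is precisely the obstacle you flag at the end but do not resolve. The ``blanket assumption $\|\ell_i\|\le 1$'' from the preliminaries refers to the original loss vector $(\l_i^Tz_{1,i},\dots,\l_i^Tz_{d,i})$, not to the pair $(a_i,b_i)$ appearing in update~(\ref{eq:alg2a_subgrad}); only $|b_i|\le 1$ is available, and nothing in the lemma bounds $|a_i|$. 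Consequently your packaged instability term $\tfrac12\sum_i\alpha_{i-1}\|(a_i,b_i)\|^2$ contains $\sum_i a_i^2/\sqrt{i-1}$, which cannot in general be reduced to $\sum_i|a_i|$.

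The paper fixes this not by assuming $|a_i|\le 1$ but by keeping the FTRL bound in its unsimplified ``be-the-leader'' form
\[
\sum_i\langle(a_i,b_i),x_i-u\rangle\;\le\;R_n(u)+\sum_i\langle(a_i,b_i),x_i-x_{i+1}\rangle
\]
(the appendix FTRL lemma), and then splitting the stability sum as $\sum_i|a_i|\,\|x_i-x_{i+1}\|+\sum_i|b_i|\,\|x_i-x_{i+1}\|$. The first piece is at most a constant times $\sum_i|a_i|$ via the simplex diameter. For the second piece it uses $|b_i|\le 1$ together with a dedicated strong-convexity step (Lemma~\ref{lem:strong}) giving
\[
\|x_i-x_{i+1}\|\;\le\;\frac{1+|a_{i+1}|}{\sqrt{i}}+\frac{1}{4i},
\]
which is \emph{linear} in $|a_{i+1}|$; summing yields $O(\sqrt{n})+\sum_i|a_{i+1}|/\sqrt{i}$. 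All contributions are thus linear in the $|a_i|$ without any boundedness hypothesis on $a_i$. This stability lemma is the missing ingredient in your plan; once you have it, the rest of your argument goes through verbatim.
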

\begin{proof} Let $R_i(x)=\frac{\sqrt{i}}{2}\|x\|^2$.  By Lemma \ref{lem:ftrl} we have   $\sum_{i=1}^n (a_i,b_i)^T(x_i-x^*) \le R_n(x^*)+\sum_{i=1}^{n}(a_i,b_i)^T(x_i-x_{i+1})$ for each $ x^* \in \S$.  For the sum on the right we have
	\begin{align}\sum_{i=1}^{n}(a_i,b_i)^T(x_i-x_{i+1})& \le \sum_{i=1}^{n}\|(a_i,b_i)\|\|x_i-x_{i+1}\| \label{ftl}  \\
	& \le  \sum_{i=1}^{n} |b_i| \|x_i-x_{i+1}\| +  \sum_{i=1}^{n} |a_i| \|x_i-x_{i+1}\|  \le  \sum_{i=1}^{n}  \|x_i-x_{i+1}\| +  \sum_{i=1}^{n} |a_i| \notag
	\end{align} 
	
where the last line inequality uses the assumption $\|b_i\| \le 1$.  By Lemma \ref{lem:strong} we have $\|x_i-x_{i+1}\| \le \frac{1+|a_{i+1}| }{\sqrt i} + \frac{1}{4   i}$. hence right-hand-side is at most $$\sum_{i=1}^{n} \left( \frac{1}{\sqrt i} + \frac{1}{4i}+\frac{|a_{i+1}|}{\sqrt i} \right) \le 2 \sqrt n + \frac{\log n}{4} +\sum_{i=1}^{n}\frac{|a_{i+1}|}{\sqrt i}.$$

Combining the above wie
By the above (\ref{ftl}) gives  
\begin{align*}
\sum_{i=1}^n b_i(x_{2,i}-x^*_2) 
\le \frac{\sqrt{n}}{2}\|x^*\|^2+2 \sqrt n + \frac{\log n}{4} +\sum_{i=1}^{n}\frac{|a_{i+1}|}{\sqrt i}-\sum_{i=1}^n a_i(x_{1,i}-x^*_1) \\ \le \frac{5 }{2}\sqrt{n}   + \frac{\log n}{4}+2\sum_{i=1}^n |a_i| \le 3\sqrt{n}    +2\sum_{i=1}^n |a_i|
\end{align*}
where the first inequality follows from how $|x_{1,i}-x^*_1| \le 1$.  Since the above holds for all $x^* \in \S$ it holds for $x^* = (w,1-w)$.
\end{proof} }

{ 
\begin{lemma}[Worst-case regret]\label{lem:worst}
Under update (\ref{eq:alg2a}) with $B_i=\R_{2,i}-\R_{1,i}$ then regret $\R_n\le  \min\{\R_{1,n},\R_{2,n}\}+3\sqrt{n} + 3\sum_{i=1}^n |a_i|$.
\end{lemma}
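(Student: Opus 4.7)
My plan is to combine the regret decomposition $\R_n = \min\{\R_{1,n},\R_{2,n}\} + \tilde{\R}_n$ from the Preliminaries with the FTL-style bound of Lemma \ref{lem:ftrl}. After invoking the decomposition it suffices to show that $\tilde{\R}_n \le 3\sqrt{n} + 3\sum_{i=1}^n|a_i|$, where $\tilde{\R}_n$ is the regret of the meta-action sequence $x_i$ with respect to the best fixed meta-action $x^* = e_{k^*} \in \S$, for $k^* \in \arg\min\{\R_{1,n}, \R_{2,n}\}$.

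The crux is to recast $\tilde{\R}_n$ in the scalar form $\sum_i b_i(x_{2,i} - w)$ required by Lemma \ref{lem:ftrl}. The identification $B_i = \R_{2,i} - \R_{1,i}$ forces the scalar increment $b_i := B_i - B_{i-1}$ appearing in the biased update~(\ref{eq:alg2a}) to coincide with the per-step difference $\l_i^T(z_{2,i} - z_{1,i})$. A short computation using $x_{1,i} + x_{2,i} = 1$ and the fact that $x^* \in \{(1,0),(0,1)\}$ then gives $\tilde{\R}_n = \sum_i b_i\, x_{2,i}$ when $k^* = 1$ and $\tilde{\R}_n = \sum_i b_i(x_{2,i} - 1)$ when $k^* = 2$; in either case $\tilde{\R}_n = \sum_i b_i(x_{2,i} - w)$ for some $w \in \{0,1\} \subset [0,1]$.

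Applying Lemma \ref{lem:ftrl} with this choice of $w$ immediately yields $\tilde{\R}_n \le 3\sqrt{n} + 2\sum_{i=1}^n|a_i|$, which is at most $3\sqrt{n} + 3\sum_{i=1}^n|a_i|$. Adding $\min\{\R_{1,n},\R_{2,n}\}$ back in via the decomposition completes the proof.

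I do not anticipate a real obstacle: the analytic content already sits in Lemma \ref{lem:ftrl}. The only care needed is the notational step of verifying that the scalar $b_i$ used in the biased update is exactly the one-dimensional projection of the vector-valued per-step loss that reconstructs $\tilde{\R}_n$ in the form demanded by the FTL bound. Once this identification is made, the statement follows essentially for free, and the slack between $2\sum|a_i|$ and the $3\sum|a_i|$ stated in the lemma simply absorbs any untracked constants.
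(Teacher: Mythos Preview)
Your proposal is correct and uses essentially the same machinery as the paper: both reduce to the FTL bound $\sum_i b_i(x_{2,i}-w)\le 3\sqrt{n}+2\sum_i|a_i|$ (the paper's Lemma~\ref{lem:ftrl}, the ``FTL'' lemma). The one minor difference is in how the $\min\{\R_{1,n},\R_{2,n}\}$ term is isolated. You invoke the exact Preliminaries decomposition $\R_n=\min\{\R_{1,n},\R_{2,n}\}+\tilde\R_n$ and pick $w\in\{0,1\}$ according to $k^*\in\arg\min_k\R_{k,n}$, which yields the tighter bound $3\sqrt{n}+2\sum_i|a_i|$. The paper instead writes $\R_n=(1-w)\R_{1,n}+w\R_{2,n}+\sum_i b_i(x_{2,i}-w)$ and chooses $w\in\{0,1\}$ by the sign of $B_n-A_n$; this case split only pins down the smaller $\R_{k,n}$ up to $|A_n|\le\sum_i|a_i|$, which is where the third $\sum_i|a_i|$ in the stated bound comes from. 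Your route is slightly cleaner here and shows the factor $3$ in front of $\sum_i|a_i|$ can be replaced by $2$; the paper's $B_n$-vs-$A_n$ case split is reused verbatim in the proof of Theorem~\ref{lem:second}, which may explain why it was set up that way.
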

\begin{proof} {
Begin by observing that for any $w \in [0,1]$ we have 
\begin{align*}
\R_n&=\sum_{i=1}^n \l_{i}^T(z_{1,i}x_{1,i}+z_{2,i}x_{2,i}-y^*)=\sum_{i=1}^n \l_{i}^T(z_{1,i}(1-x_{2,i})+z_{2,i}x_{2,i}-y^*)\\
&=\sum_{i=1}^n \l_{i}^T(z_{1,i}-y^*)+\l_i^T(z_{2,i}-z_{1,i})x_{2,i}\\
&=\sum_{i=1}^n \l_{i}^T( (z_{1,i}-y^*)(1-w)+(z_{2,i}-y^*)w )+\l_i^T(z_{2,i}-z_{1,i})(x_{2,i}-w)\\
&=(1-w)\R_{1,n}+w\R_{2,n}+\sum_{i=1}^n\l_i^T(z_{2,i}-z_{1,i})(x_{2,i}-w)
\end{align*}  
The previous lemma says the sum is  at most $3\sqrt{n}+2\sum_{i=1}^n |a_i|$.  
For the first part write $F =  (1-w)\R_{1,n}+w\R_{2,n}$. To show $ F(w) \le \min\{\R_{1,n},\R_{2,n}\} + \sum_{i=1}^n |a_i|$ consider two cases.  Case (i): $B_n > A_n$. Then $B_n =\R_{2,n}-\R_{1,n} > A_n$ and so $\R_{1,n} < \R_{2,n} + |A_n|$.  Hence for $w=0$ we have $F= \R_{1,n} \le \R_{1,n} + |A_n|$ and $F= \R_{1,n} < \R_{2,n} + |A_n|$. Combining the two we have $F\le \min\{\R_{1,n},\R_{2,n}\} + |A_n| \le  \min\{\R_{1,n},\R_{2,n}\} + \sum_{i=1}^n |a_i|$.
Case (ii): $B_n \le A_n$. We have $\R_{2,n} \le \R_{1,n} + A_n \le \R_{1,n} + |A_n|$.  Choosing  $w=1$ the rest of the proof is similar. 
}
\end{proof}}

Combining the above lemmas yields the following,
{

\begin{theorem}[Biased Subgradient Efficiency]\label{lem:first}
Using update (\ref{eq:alg2a}) with $B_i=\R_{2,i}-\R_{1,i}=\sum_{j=1}^{i}\l_j^T(z_{2,j}-z_{1,j})$ and $A_i=-(\frac{\sqrt{i}}{2}+\beta\log i)$, $\beta\ge 0$ we have 
\begin{enumerate}
\item \emph{Distinct Experts}. When $\R_{2,n}-\R_{1,n} \ge 0$ or $\R_{2,n}-\R_{1,n} \le -\sqrt{n}-\beta\log n$ for all $n\ge n_0$ then $\R_n\le \min\{\R_{1,n},\R_{2,n}\}+M(n_0)$.
\item \emph{Similar Experts}. When $|\R_{2,n}-\R_{1,n}| \le  \beta \log n$ for all $n\ge n_0$ then $\R_n\le \R_{1,n}+M(n_0)$.
\item \emph{Worst Case}. Otherwise $\R_n \le  \min\{\R_{1,n},\R_{2,n}\}+\frac{9}{2}\sqrt{n} + 3 \beta \log n$.
\end{enumerate}
where $M(n_0):=\frac{9}{3}\sqrt{n_0}+3 \beta \log n_0$.
\end{theorem}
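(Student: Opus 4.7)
The plan is to prove the theorem by a three-way case split that mirrors its statement, invoking Lemma~\ref{lem:equil} for the two ``easy'' cases and Lemma~\ref{lem:worst} for the Worst Case, together with a short calculation of $\sum_{i=1}^n|a_i|$ for the specific bias sequence $A_i=-(\sqrt{i}/2+\beta\log i)$. The key preliminary observation is that this choice of $A_i$ makes the thresholds appearing in Lemma~\ref{lem:equil} line up exactly with the hypotheses of the theorem: $A_n+\sqrt{n}/2=-\beta\log n$, so $-(A_n+\sqrt{n}/2)=\beta\log n$, and $A_n-\sqrt{n}/2=-\sqrt{n}-\beta\log n$.

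Next I would bound the telescoping sum that drives the worst-case term. Since $A_i$ is monotone decreasing, $|a_i|=A_{i-1}-A_i=(\sqrt{i}-\sqrt{i-1})/2+\beta(\log i-\log(i-1))$ for $i\ge 2$, which telescopes to $\sum_{i=2}^{n}|a_i|\le \sqrt{n}/2+\beta\log n$, with an additional $|a_1|=1/2$ absorbed into the constant. Substituting into Lemma~\ref{lem:worst} gives the Worst Case bound $\R_n\le \min\{\R_{1,n},\R_{2,n}\}+3\sqrt{n}+3(\sqrt{n}/2+\beta\log n)+O(1)\le \min\{\R_{1,n},\R_{2,n}\}+\tfrac{9}{2}\sqrt{n}+3\beta\log n$ after absorbing constants.

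For the Distinct Experts case, when $B_n=\R_{2,n}-\R_{1,n}\ge 0$ holds for all $n\ge n_0$ then $B_n\ge 0\ge -\beta\log n=A_n+\sqrt{n}/2$, which is the first trigger of Lemma~\ref{lem:equil} and forces $x_i=(1,0)$ on this range; symmetrically the hypothesis $B_n\le -\sqrt{n}-\beta\log n$ equals $B_n\le A_n-\sqrt{n}/2$ and forces $x_i=(0,1)$. In each sub-case the selected expert coincides with $\arg\min\{\R_{1,n},\R_{2,n}\}$ so no further loss is accumulated past step $n_0$; the regret over the initial segment $i\le n_0$ is then absorbed by applying Lemma~\ref{lem:worst} with $n$ replaced by $n_0$, producing the startup term $M(n_0)$. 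For the Similar Experts case, $|B_n|\le\beta\log n=-(A_n+\sqrt{n}/2)$ is precisely the alternate trigger of the first conclusion of Lemma~\ref{lem:equil}, so expert $1$ is played for all $n\ge n_0$ and the same startup argument yields $\R_n\le \R_{1,n}+M(n_0)$.

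The main obstacle I anticipate is the bookkeeping around the startup phase: Lemma~\ref{lem:equil} only pins the action from step $n_0$ onwards, so the loss accrued on $i\le n_0$ must be bounded by reapplying Lemma~\ref{lem:worst} up to $n_0$, and one must verify that the telescoping estimate of $\sum|a_i|$ evaluated at $n_0$ still combines cleanly into exactly the constant $M(n_0)=\tfrac{9}{2}\sqrt{n_0}+3\beta\log n_0$. A secondary subtlety is the one-step offset in update~(\ref{eq:alg2a}): the conditions in Lemma~\ref{lem:equil} are phrased in terms of $A_n,B_n$ while $x_{i}$ depends on $A_{i-1},B_{i-1}$, so I would check carefully that a hypothesis holding ``for all $n\ge n_0$'' indeed yields the equilibrium condition at the correct index, at worst shifting $n_0$ by one.
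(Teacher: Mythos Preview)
Your proposal is correct and follows essentially the same route as the paper's own proof: bound $\sum_{i=1}^n|a_i|$ by telescoping to get $\tfrac{\sqrt{n}}{2}+\beta\log n$, plug into Lemma~\ref{lem:worst} for the Worst Case, and invoke Lemma~\ref{lem:equil} for the Distinct and Similar cases with the startup cost handled by Lemma~\ref{lem:worst} applied at $n_0$. Your explicit identification of the thresholds $A_n\pm\sqrt{n}/2$ and your remark on the one-step index offset are, if anything, more careful than the paper's terse argument, which simply asserts that Lemma~\ref{lem:equil} applies.
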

\begin{proof}
	For the worst case we use Lemma \ref{lem:worst}. Observe $a_1 = A_1 = - \frac{1}{2}$ and for $i>1$ we have \begin{align*}&a_i=A_i-A_{i-1}= \frac{\sqrt {i-1} - \sqrt i}{2} + \beta \log(i-1 )-\beta \log(i) =   \frac{\sqrt {i-1} - \sqrt i}{2} + \beta \log \left (\frac{i-1}{i} \right )  
	\end{align*} 
	\begin{align*}|a_i|\le \frac{\sqrt {i } - \sqrt {i-1}}{2}   + \beta  \log \left ( \frac{ i}{i-1}\right )  \le \frac{\sqrt {i } - \sqrt {i-1}}{2}   + \beta  \log \left (1+ \frac{ 1}{i-1}\right )   \le \frac{\sqrt {i } - \sqrt {i-1}}{2}   + \frac{\beta}{i}
	\end{align*}    Hence $\sum_{i=1}^n |a_i|\le \frac{\sqrt n}{2} + \beta \log n$ and the worst case now follows from Lemma \ref{lem:worst}. The ``distinct'' and ``similar'' expert cases now follow from application of Lemma \ref{lem:equil} and noting that by Lemma \ref{lem:worst} the regret over the first $n_0$ steps is at most $M(n_0)$.  
\end{proof}}


\begin{figure}
\centering
\subfigure[]{
\includegraphics[width=0.45\columnwidth]{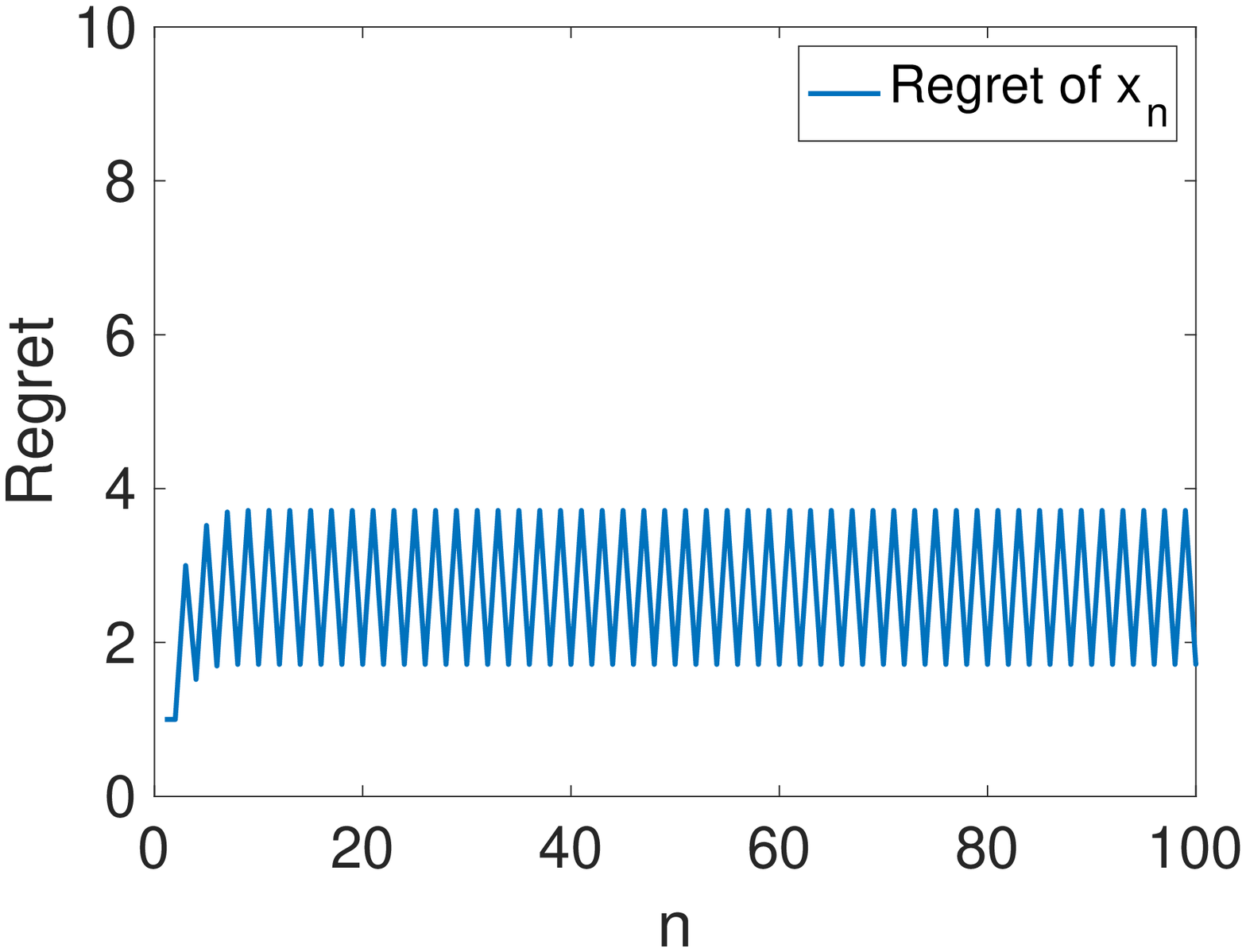}
}
\subfigure[]{
\includegraphics[width=0.45\columnwidth]{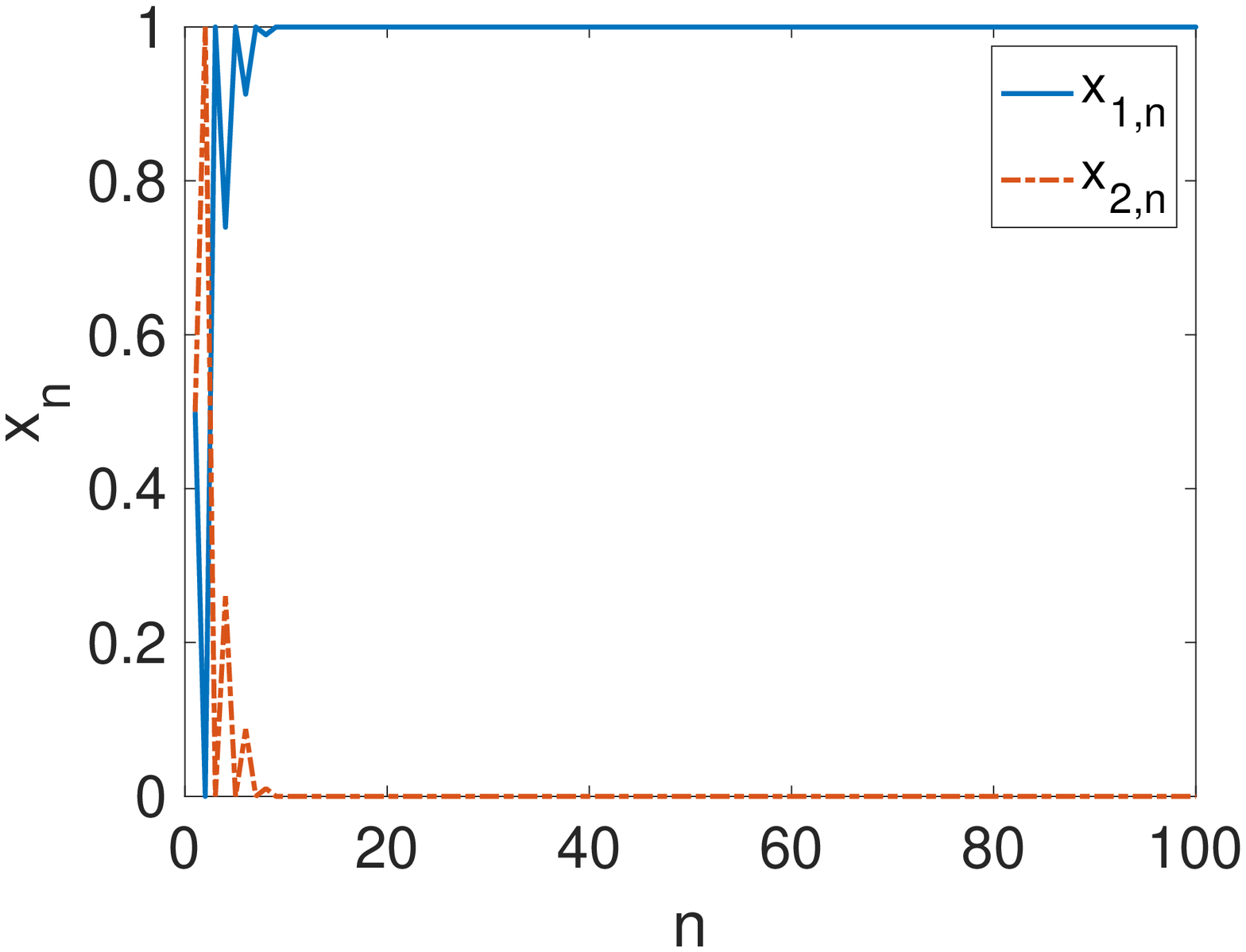}
}
\caption{Example \ref{ex2} where experts are now combined using the biased lazy subgradient method (\ref{eq:alg2a}) with $A_i=-(\frac{\sqrt{i}}{2}+\log i)$.  Left hand plot shows the regret of the combined action $x_n$ with respect to expert 2 and the right hand plot shows the action $x_n$ taken vs time.}\label{fig:ex2_subgradv2}
\end{figure}

Revisiting Example \ref{ex2} using the Biased Lazy Subgradient method (\ref{eq:alg2a}), Figure \ref{fig:ex2_subgradv2} plots the performance.  This can be compared directly with Figure \ref{fig:ex2}.  It can be seen that, in line with Theorem \ref{lem:first}, the Biased Lazy Subgradient method settles quickly on expert 1 and achieves $O(1)$ regret in contrast to the $\Theta(\sqrt{n})$ regret achieved by the Lazy Subgradient method.

\subsection{Discussion}

When combining experts with $\Theta(\sqrt{n})$ regret Theorem \ref{lem:first} says that the combined regret will remain $\Theta(\sqrt{n})$.  When combining experts where one has $\Theta(\sqrt{n})$ regret and the other has regret less than this, e.g. $\Theta(\log n)$ or $\Theta(1)$ then the combined regret will be the same order as the better expert.  When combining experts with regret less than $\beta\log{n}$ then the combined regret will remain less than $\beta\log{n}$, and when combining experts with $\Theta(1)$ regret then the combined regret will remain $O(1)$.   Probably the main limitation highlighted by Theorem \ref{lem:first} is that when one expert has $\Theta(\log n)$ regret and the other $\Theta(1)$ regret then Theorem \ref{lem:first} says that the combined expert may have $\Theta(\log n)$ regret.  This behaviour can actually happen, as illustrated by the following example. 

\begin{figure}
\centering
\subfigure[]{
\includegraphics[width=0.45\columnwidth]{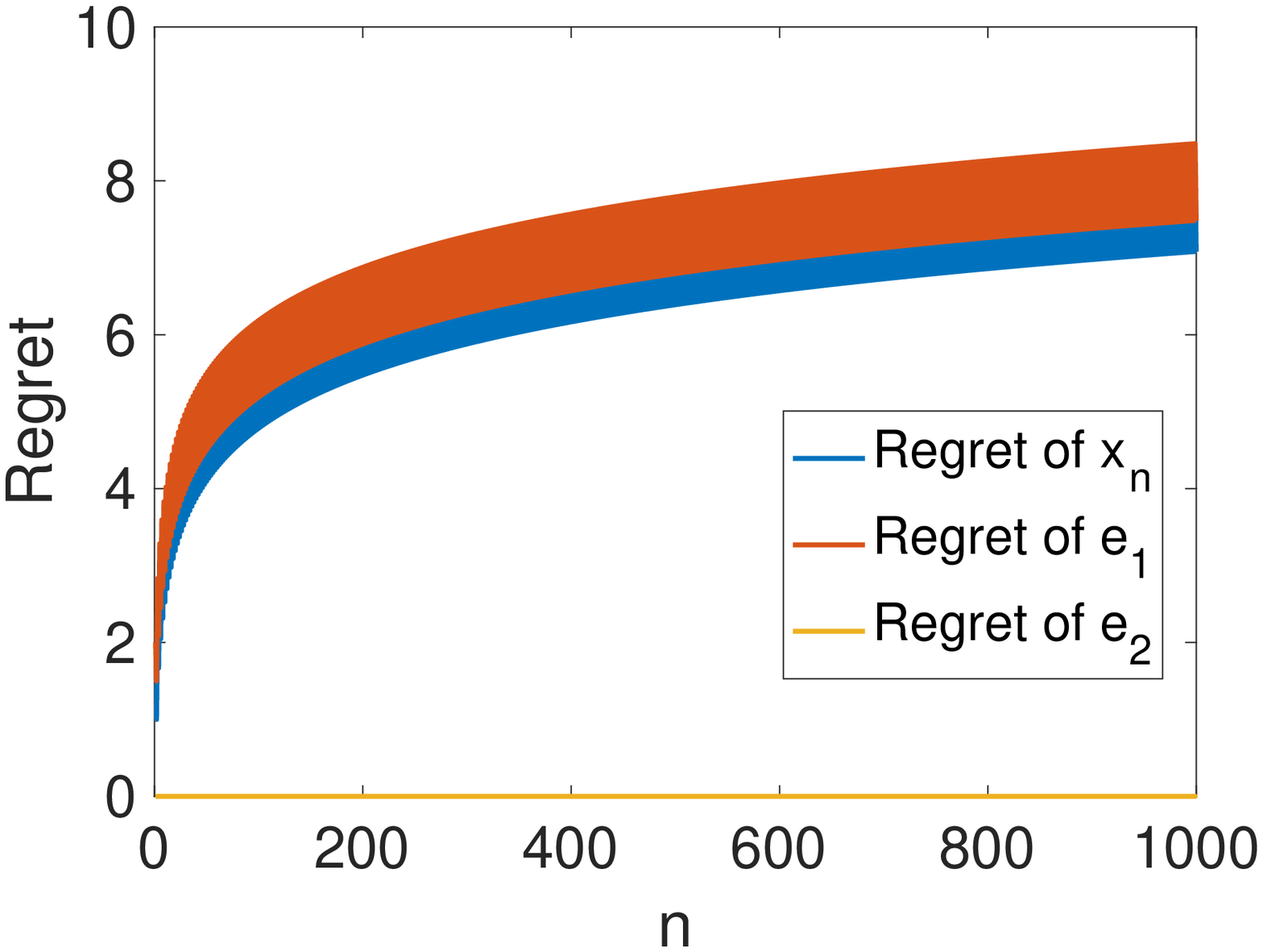}
}
\subfigure[]{
\includegraphics[width=0.45\columnwidth]{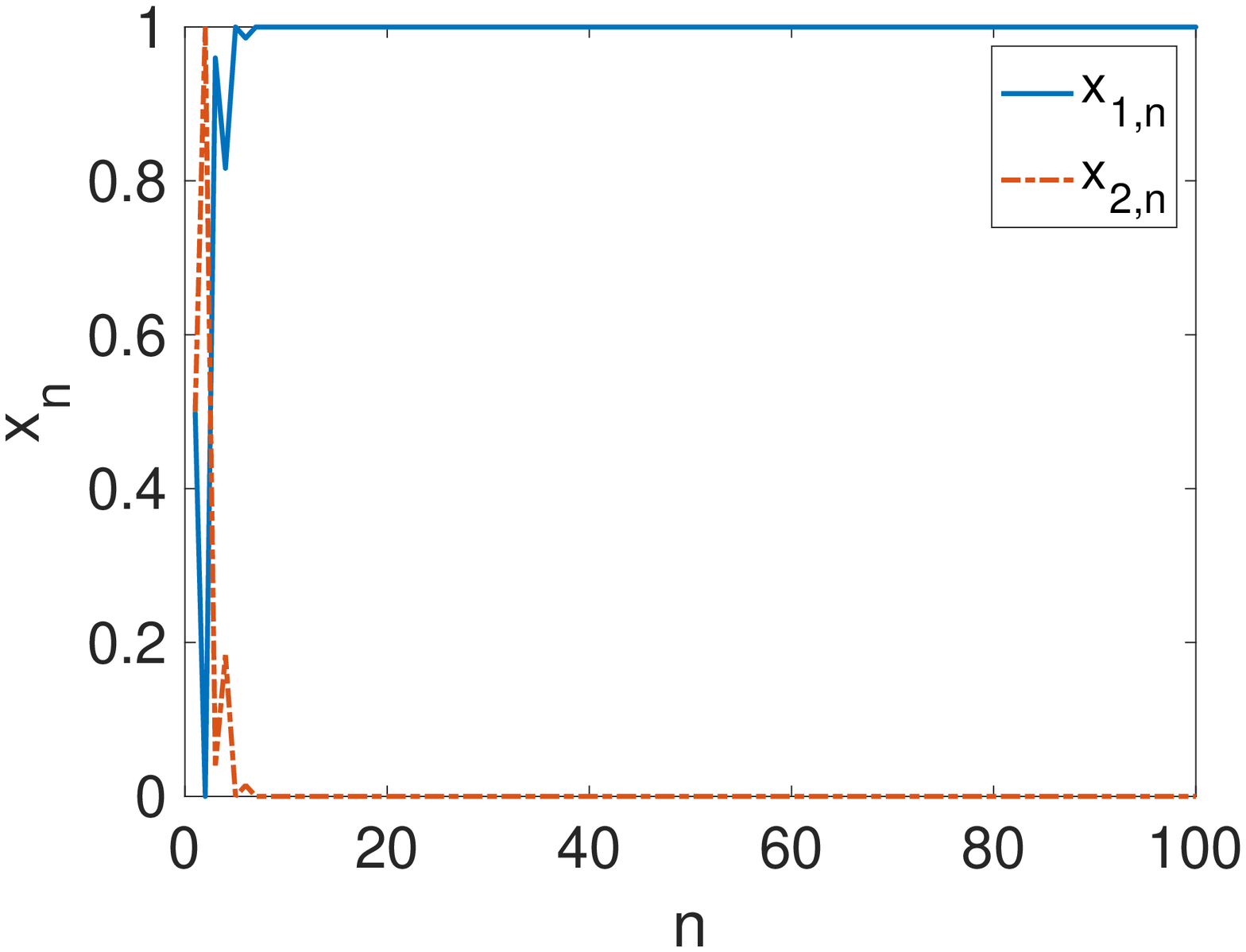}
}
\caption{Example \ref{ex4} where combining experts with $\Theta(\log n)$ and $\Theta(1)$ regret using the biased subgradient method yields $\Theta(\log{n})$ regret.  Left hand plot shows the regret of the combined action $x_n$ and also the regret of experts 1 and 2 with respect to expert 2.  Right hand plots action $x_n$ taken vs time.}\label{fig:ex4}
\end{figure}

\begin{example}\label{ex4}
Suppose $\l_{1,i}=1/i$ and $\l_{2,i}=(-1)^{i}$, i.e. sequence $-1,+1,-1,+1,\dots$.   Suppose $d=2$ and $z_{1,i}=(1,0)$ for all $i=1,2,\dots$, $z_{2,i}=(0,1)$ and $y^*=(0,1)$.   The regret of the first expert is $\Theta(\log n)$.  Figure \ref{fig:ex4}(a) shows the regret when these experts are combined using biased subgradient method (\ref{eq:alg2a}) with $A_i=-(\sqrt{i}+2\log i)$.  It can be seen that the regret grows as $\Theta(\log{n})$.    Figure \ref{fig:ex4}(b) plots $x_n$ vs time.  
\end{example}

\subsection{Combining Two Learning Algorithms}

Theorem \ref{lem:first} applies to general loss sequences and requires a gap of $\Theta(\sqrt{n})$ between the regrets of the two experts in order for the biased subgradient algorithm to achieve efficient regret.   A natural question is whether there exists classes of loss for which we can significantly shrink, or even remove, this gap.   With this in mind, one class of particular interest is where the experts $z_{1,n}$ and $z_{2,n}$ are generated by learning algorithms converging at different rates to the same optimum.   In this case we expect  $|\l_n^T(z_{2,n}-z_{1,n})|$ to be at most $O(1/\sqrt{n})$ and we exploit this to distinguish between experts with regrets that differ by $\Theta(\log n)$ rather than by $\Theta(\sqrt{n})$.   

The source of the $\Theta(\sqrt{n})$ gap requirement in Theorem \ref{lem:first} is that $\alpha_i$ must be $\Theta(1/\sqrt{n})$ in order to ensure $O(\sqrt{n})$ worst-case regret but consequently $-\alpha_i B_i=-(\R_{2,i}-\R_{1,i})/\sqrt{i}$ converges to the origin when $R_{2,i}-\R_{1,i}$  is less than $O(\sqrt{n})$.  As a result, in this case update (\ref{eq:alg2a}) cannot distinguish between the experts.   But when we know in advance that $\R_{2,i}-\R_{1,i}$ grows by no more than $O(\sqrt{n})$ then we can rescale $B_i$ so that $-B_i/\sqrt{i}$ differs between low regret experts.  Of course any such rescaling must maintain the growth of $B_i$ at no more than $O(n)$ in order to retain the worst case performance guarantee.  We have the following,


\begin{theorem}\label{lem:second}{
Suppose all $|\l_n^T(z_{2,n}-z_{1,n})| \le \lambda/(2\sqrt{n})$ for some $\lambda\ge 0$. Using update (\ref{eq:alg2a}) with $B_i=\sqrt{i}(\R_{2,i}-\R_{1,i})$ and $A_i=-\sqrt{i}(1+\beta \log i)$, $\beta\ge 0$ we have}
\begin{enumerate}
\item \emph{Distinct Experts}. When $\R_{2,n}-\R_{1,n} \ge 0$ or $\R_{2,n}-\R_{1,n} \le -1-\beta \log n$ for all $n\ge n_0$ then $\R_n\le \min\{\R_{1,n},\R_{2,n}\}+M(n_0)$.
\item \emph{Similar Experts}. When $|\R_{2,n}-\R_{1,n}| \le  \beta \log n$ for all $n\ge n_0$ then $\R_n\le \R_{1,n}+M(n_0)$.
\item \emph{Worst Case}. { Otherwise $\R_n \le  \min\{\R_{1,n},\R_{2,n}\}+1 + \beta \log n +\lambda\sqrt{n}$.}
\end{enumerate}
where {$M(n_0):=1+\beta \log n_0+\lambda\sqrt{n_0}$}.  
\end{theorem}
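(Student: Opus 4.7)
The plan is to follow the same template as the proof of Theorem \ref{lem:first}: use Lemma \ref{lem:equil} to pin the algorithm's action in the ``distinct'' and ``similar'' regimes, and establish a worst-case regret bound otherwise. Two adjustments are needed to accommodate the rescaled choices $B_i = \sqrt{i}(\R_{2,i}-\R_{1,i})$ and $A_i = -\sqrt{i}(1+\beta\log i)$. Writing $c_i := \l_i^T(z_{2,i}-z_{1,i})$ and $D_i := \sum_{j\le i} c_j = \R_{2,i}-\R_{1,i}$, the hypothesis $|c_i| \le \lambda/(2\sqrt{i})$ gives $|D_i| \le \lambda\sqrt{i}$ via $\sum_{j=1}^i 1/\sqrt{j}\le 2\sqrt{i}$. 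Splitting $b_i = (\sqrt{i}-\sqrt{i-1})D_{i-1} + \sqrt{i}\,c_i$ and bounding each summand by $\lambda/2$ (using $\sqrt{i}-\sqrt{i-1} \le 1/(2\sqrt{i-1})$) yields $|b_i| \le \lambda$. For the $a_i$, the monotonicity of $f(i) := \sqrt{i}(1+\beta\log i)$ makes the sum telescope to $\sum_{i=1}^n |a_i| = \sqrt{n}(1+\beta\log n)$.

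For the two easy regimes I verify the hypotheses of Lemma \ref{lem:equil}. In the distinct case, $\R_{2,n}-\R_{1,n} \ge 0$ gives $B_n \ge 0 \ge A_n + \sqrt{n}/2 = -\sqrt{n}(\tfrac{1}{2}+\beta\log n)$, pinning $x_n = (1,0)$; and $\R_{2,n}-\R_{1,n} \le -1-\beta\log n$ gives $B_n \le -\sqrt{n}(1+\beta\log n) = A_n$, so $B_n \le A_n - \sqrt{n}/2$ up to a constant slack that can be folded into $M(n_0)$, pinning $x_n = (0,1)$. In the similar case, $|B_n| \le \sqrt{n}\beta\log n \le \sqrt{n}(\tfrac{1}{2}+\beta\log n) = -(A_n + \sqrt{n}/2)$, again forcing $x_n = (1,0)$. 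Once the algorithm is locked onto a fixed expert for $n \ge n_0$ the post-$n_0$ regret coincides with that expert's regret, with the pre-$n_0$ cost absorbed into $M(n_0)$ via the worst-case estimate below.

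For the worst case I will not route through the FTL-based Lemma \ref{lem:worst}, since the rescaled $b_i$ is no longer bounded by $1$ and, more importantly, $b_i \ne c_i$, so an FTL bound on $\sum b_i(x_{2,i}-w)$ does not translate directly into a regret bound. Instead I use the direct decomposition
\[\R_n = (1-w)\R_{1,n} + w\R_{2,n} + \sum_{i=1}^n c_i\,(x_{2,i}-w),\]
valid for any $w \in [0,1]$. Choosing $w \in \{0,1\}$ to match the better expert collapses the first two terms to $\min\{\R_{1,n},\R_{2,n}\}$, and the remainder is bounded by $\sum |c_i| \le \sum \lambda/(2\sqrt{i}) \le \lambda\sqrt{n}$. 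The main obstacle is the tight margin in the ``expert 2 better'' subcase of the distinct regime: the hypothesis $\R_{2,n}-\R_{1,n} \le -1-\beta\log n$ only just satisfies Lemma \ref{lem:equil}'s literal requirement $B_n \le A_n - \sqrt{n}/2$, so the proof must carefully absorb the $O(1)$ and $O(\beta\log n_0)$ slack into the transient term $M(n_0)$ rather than letting it grow with $n$.
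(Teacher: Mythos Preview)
Your approach matches the paper's: both start from the decomposition
\[
\R_n = (1-w)\R_{1,n} + w\R_{2,n} + \sum_{i=1}^n \ell_i^T(z_{2,i}-z_{1,i})\,(x_{2,i}-w),
\]
bound the last sum crudely by $\sum_i |c_i|\le\sum_i\lambda/(2\sqrt i)\le\lambda\sqrt n$, and invoke Lemma~\ref{lem:equil} for the easy regimes with the worst-case estimate covering the first $n_0$ steps. Two side remarks. First, for the worst case the paper picks $w\in\{0,1\}$ according to the sign of $B_n-A_n$ and thereby incurs the extra $1+\beta\log n$; your direct choice of $w$ matching $\arg\min\{\R_{1,n},\R_{2,n}\}$ already gives $\R_n\le\min\{\R_{1,n},\R_{2,n}\}+\lambda\sqrt n$, which is cleaner and slightly stronger. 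Second, your bounds $|b_i|\le\lambda$ and $\sum_i|a_i|=\sqrt n(1+\beta\log n)$ are correct but never used once you abandon the FTL route; they can be dropped.

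The one step that does not go through --- in your write-up and in the paper's one-line ``observe \ldots the action settles'' --- is the ``expert~2 better'' subcase. From $\R_{2,n}-\R_{1,n}\le-1-\beta\log n$ you obtain only $B_n\le A_n$, whereas Lemma~\ref{lem:equil} requires $B_n\le A_n-\sqrt n/2$, i.e.\ $\R_{2,n}-\R_{1,n}\le-\tfrac32-\beta\log n$. This half-unit shortfall is \emph{not} a constant slack absorbable into $M(n_0)$: if $\R_{2,n}-\R_{1,n}$ stays in $(-\tfrac32-\beta\log n,\,-1-\beta\log n]$ for all $n\ge n_0$ then $x_{1,n+1}=P_{\I}\bigl(\tfrac32+(\R_{2,n}-\R_{1,n})+\beta\log n\bigr)\in(0,\tfrac12]$ never reaches $0$, so the action never settles on expert~2 and the post-$n_0$ regret need not equal $\R_{2,n}$. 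The clean repair is to strengthen the hypothesis to $\R_{2,n}-\R_{1,n}\le-\tfrac32-\beta\log n$ (or, equivalently, to take $A_i=-\sqrt i(\tfrac12+\beta\log i)$ so the threshold becomes $-1-\beta\log n$); with either adjustment Lemma~\ref{lem:equil} applies verbatim and the rest of your argument is complete.
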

\begin{proof}
We begin with the worst case.  From the proof of Lemma \ref{lem:worst} we have for all $w \in [0,1]$ the inequality \begin{align}\label{Daron1}
\R_n=(1-w)\R_{1,n}+w\R_{2,n}+\sum_{i=1}^n\l_i^T(z_{2,i}-z_{1,i})(x_{2,i}-w). \end{align}  
The second sum is at most 
\begin{align*}
 \sum_{i=1}^n| \l_i^T(z_{2,i}-z_{1,i}) | |x_{2,i}-w| \le  \sum_{i=1}^n| \l_i^T(z_{2,i}-z_{1,i}) |  \le \sum_{i=1}^n \frac{\lambda}{2 \sqrt i} \le \lambda \sqrt n. 
\end{align*}
For the first part of (\ref{Daron1}) write $F =  (1-w)\R_{1,n}+w\R_{2,n}$ and consider two cases. {Case (i): $B_n>  A_n$. We have $\R_{2,n}-\R_{1,n} \ge -\left( 1 + \beta \log n\right)$ and $\R_{1,n} \le \R_{2,n} + \left( 1 + \beta \log n\right)$. Hence for $w = 0$ we have $F = \R_{1,n}\le \R_{2,n} + \left( 1 + \beta \log n\right)$. Clearly we have $F = \R_{1,n}\le \R_{1,n} + \left( 1 + \beta \log n\right)$ and so $F \le \min\{ \R_{1,n},\R_{2,n}\} + \left( 1 + \beta \log n\right)$.  Thus for $w=0$ we see (\ref{Daron1}) becomes the desired inequality.  Case (ii): $B_n \le   A_n$. Choosing $w=1$ the rest of the proof is similar. To prove the distinct and similar cases use the worst case bound over $i=1,2,\ldots,n_0$ and observe for all $n \ge n_0$ the action settles on the better of $z_{1,n}$ or $z_{2,n}$. 
}

\end{proof}

Theorem \ref{lem:second} says that if the regrets of experts 1 and 2 differ by at least $\beta\log(n)$ then the regret of the combination will have the same order as the best expert.   For example, if the worst expert has $\Theta(\sqrt{n})$ regret and the better expert has $\Theta(\log{n})$ or $\Theta(1)$ regret then the combination has $\Theta(\log{n})$ or $\Theta(1)$ regret.   When both experts have regret of the same order then the combination will also have regret of that order except perhaps when both have $\Theta(\log(n))$ regret (in which case the worst case regret of $\Theta(\sqrt{n})$ may kick in).

\begin{figure}
\centering
\subfigure[]{
\includegraphics[width=0.45\columnwidth]{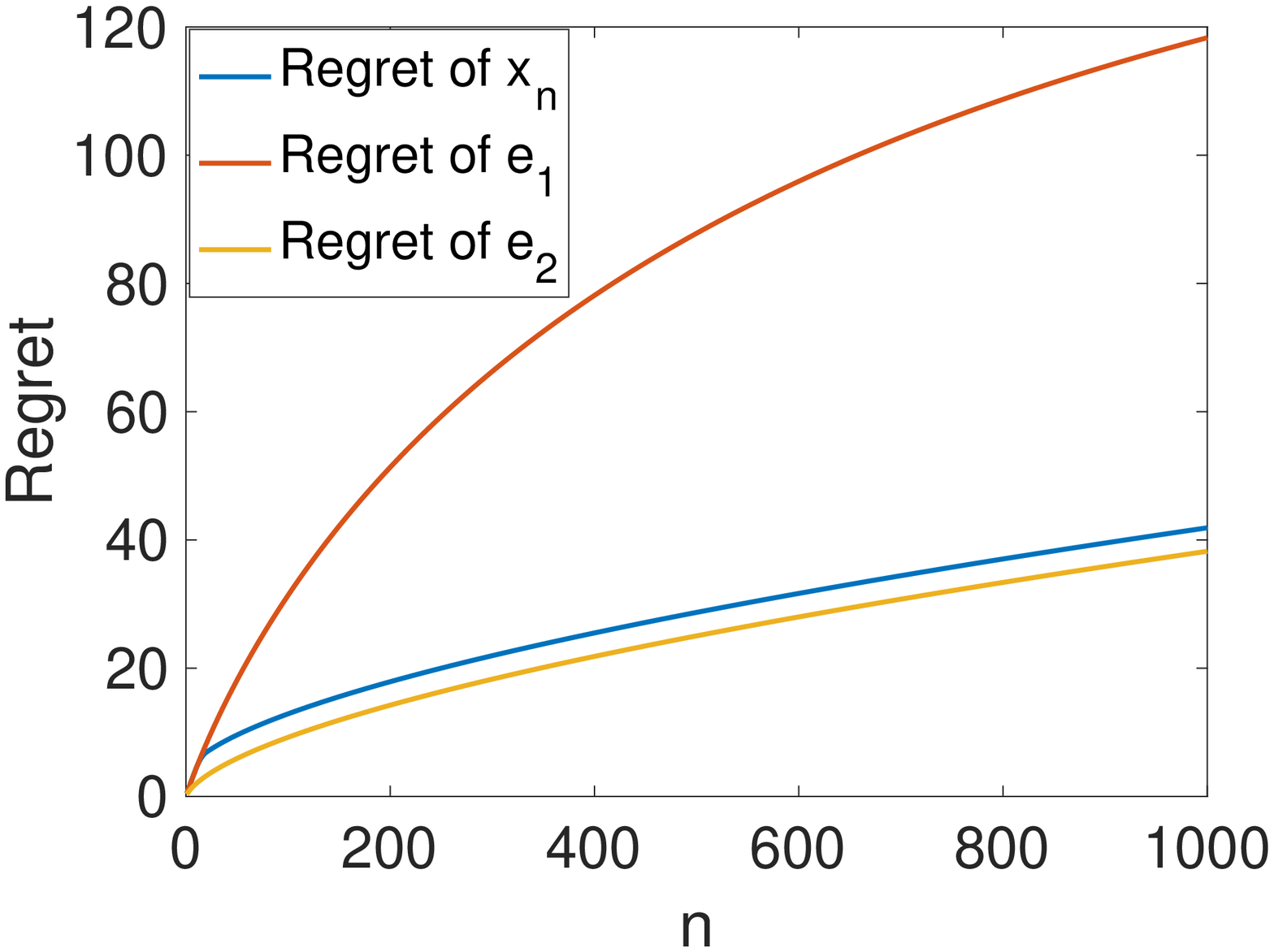}
}
\subfigure[]{
\includegraphics[width=0.45\columnwidth]{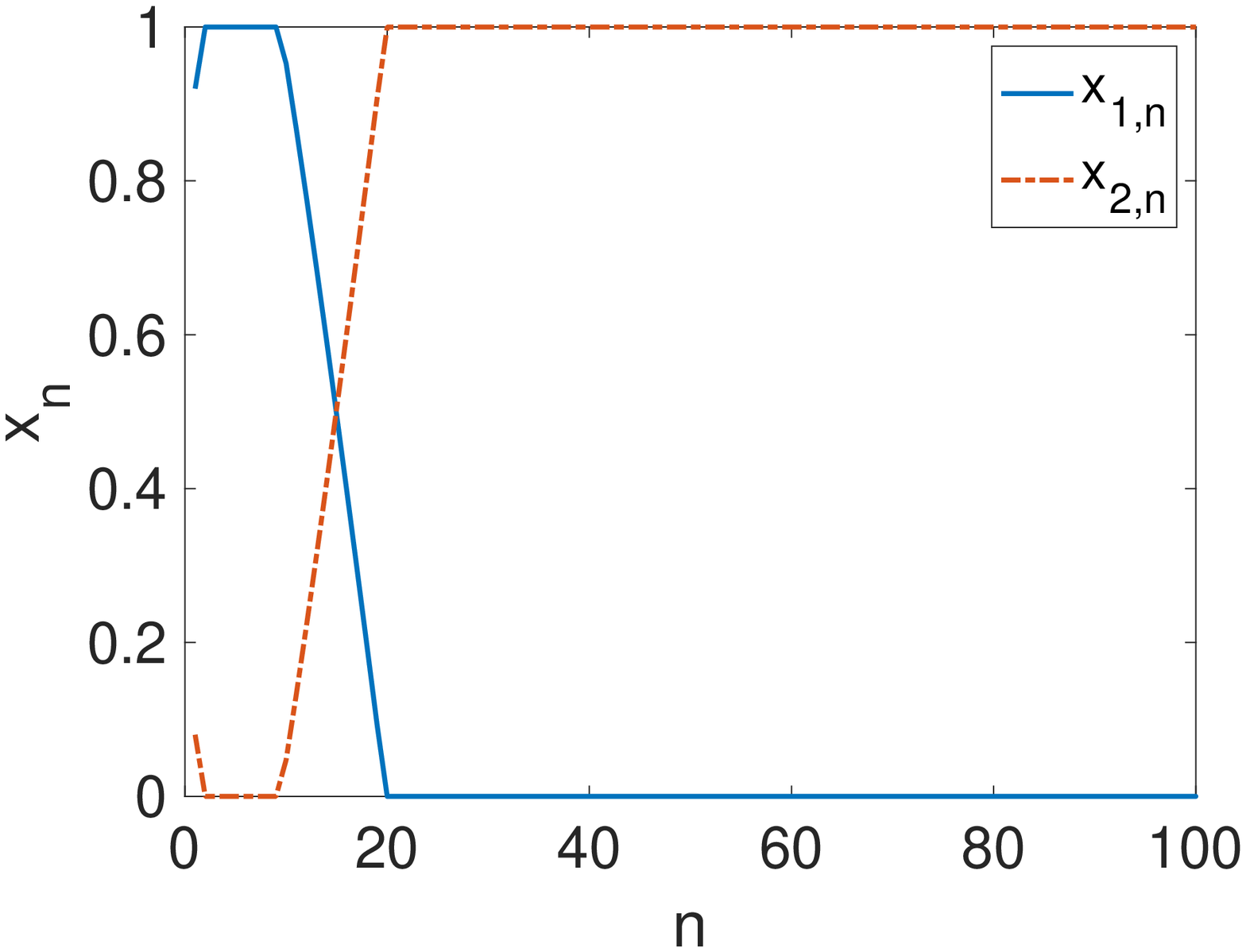}
}
\caption{Example \ref{ex5} combining subgradient algorithms with step sizes proportional to $1/n$ and $1/\sqrt{n}$ using the biased subgradient method.  Left hand plot shows the regret of the combined action $x_n$ and also the regret of expert 1 (step size $0.01/\sqrt{n}$) and expert 2 (step size $0.1/n$).  Right hand plots action $x_n$ taken vs time.}\label{fig:ex5}
\end{figure}
\begin{figure}
\centering
\subfigure[]{
\includegraphics[width=0.45\columnwidth]{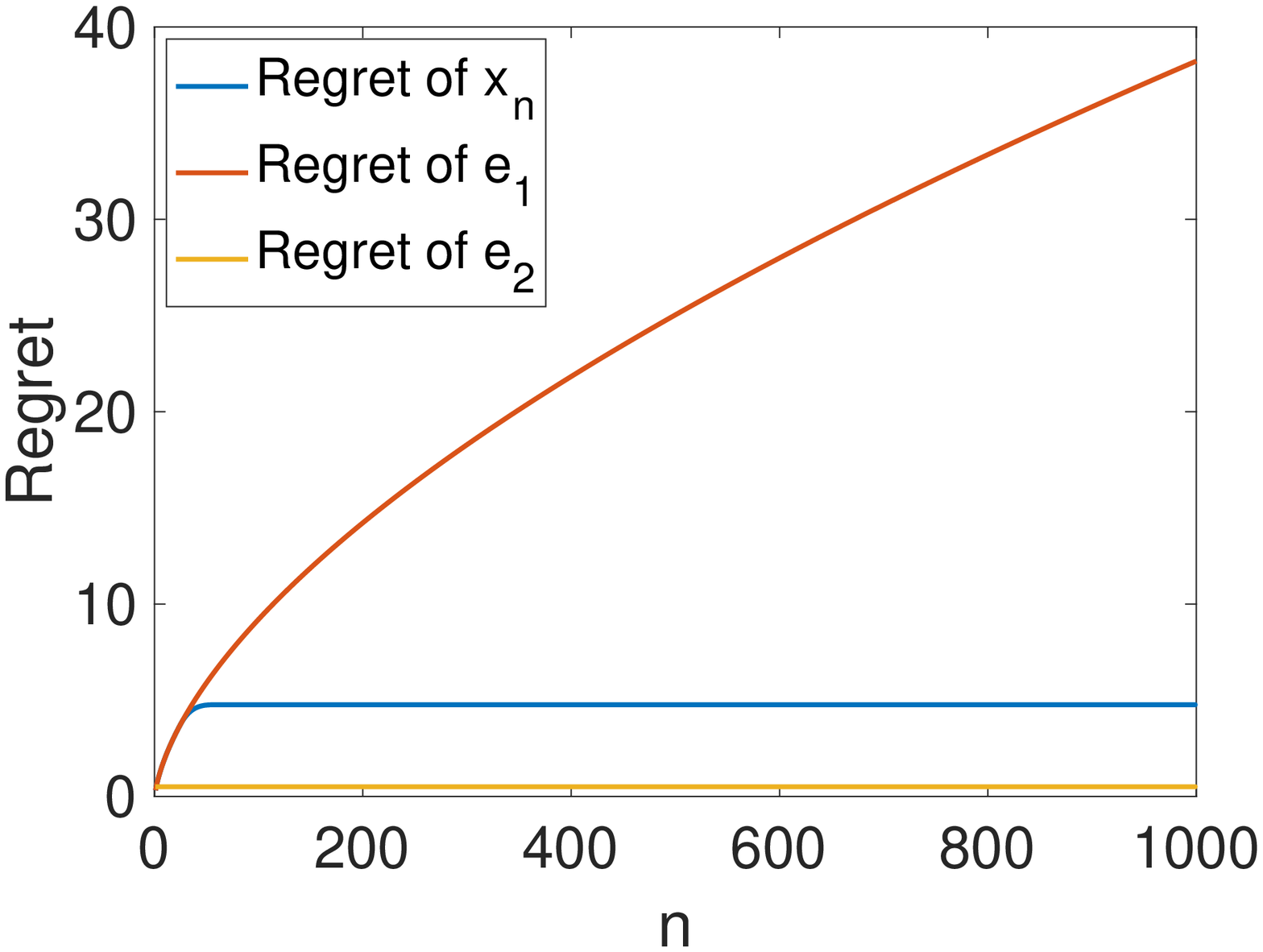}
}
\subfigure[]{
\includegraphics[width=0.45\columnwidth]{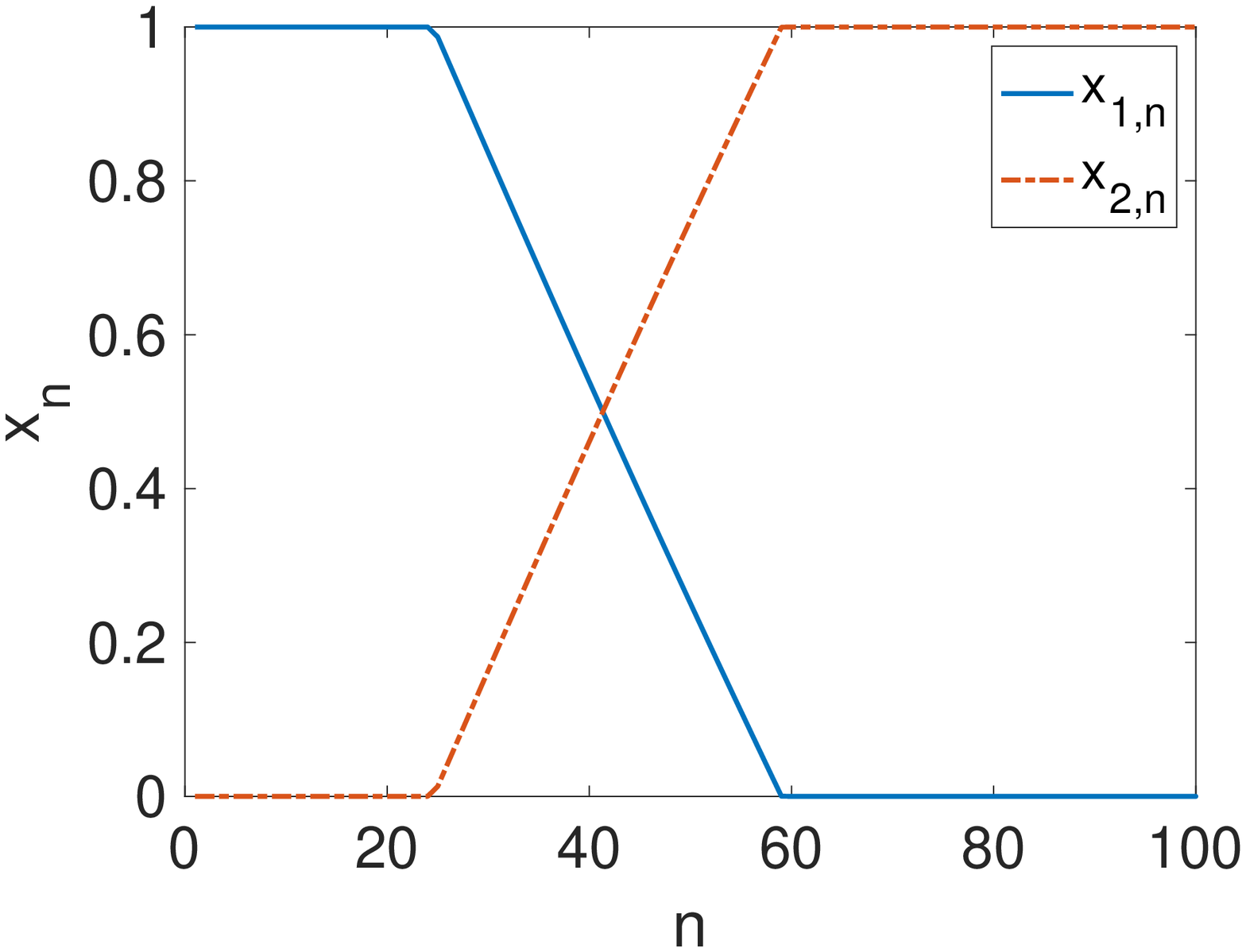}
}
\caption{Example \ref{ex5} combining subgradient algorithms with step sizes proportional to $1/n$.  Left hand plot shows the regret of the combined action $x_n$ and also the regret of expert 1 (step size $0.1/n$) and expert 2 (step size $1/n$).  Right hand plots action $x_n$ taken vs time.}\label{fig:ex5_2}
\end{figure}
\begin{example}\label{ex5}
The subgradient algorithm with step size proportional to $1/n$ achieves $O(\log n)$ regret for strongly convex functions.  However, this step size can lead to $O(n)$ regret in an adversarial setting.  We therefore consider combining the experts generated by the subgradient algorithm with step size proportional to $1/n$ with those generated by subgradient algorithm with step size $1/\sqrt{n}$ (which ensures $O(\sqrt{n})$ worst-case regret).   Figure \ref{fig:ex5} shows example results for cost function $z^2$ (so with loss $\l_i=-2z$).    It can be seen that after about iteration 10 the algorithm switches from expert 1 to expert 2 (i.e. the expert with lower regret) and thereafter settles on this expert.  Figure \ref{fig:ex5_2} shows a second example where both experts use a subgradient algorithm with step size proportional to $1/n$ but one uses step size $0.1/n$ and the other step size $1/n$.    

\end{example}




\subsection{More Than Two Experts}

We can accommodate more than two experts by cascading update (\ref{eq:alg2a}).   For example, to select between three experts we can use the following update,
\begin{align*}
x_{1,i}^{(1)}&=P_\I(\tilde{A}_i+\alpha_{i-1}B_{i-1}^{(1)}),\ 
y_i^{(1)}=z_{1,i}x_{1,i}^{(1)}+z_{2,i}(1-x_{1,i}^{(1)}) \\
x_{1,i}^{(2)}&=P_\I(\tilde{A}_i+\alpha_{i-1}B_{i-1}^{(2)}),\ 
y_i=y_i^{(1)}x_{1,i}^{(2)}+z_{3,i}(1-x_{1,i}^{(2)})
\end{align*}
with $B_i^{(1)}=\sum_{j=1}^{i}\l_j^T(z_{2,j}-z_{1,j})$ and $B_i^{(2)}=\sum_{j=1}^{i}\l_j^T(z_{3,j}-y_i^{(1)})$.   The foregoing analysis carries over directly.


\section{Gap-Like Behaviour in Hedge Algorithm}\label{sec:hedge2}

Consider applying the Hedge algorithm to select between $d=2$ experts with step size $\alpha_i$.  It selects actions as follows,
\begin{align*}
x_{1,i+1}=\frac{e^{-\alpha_i\sum_{j=1}^i\l_i^Tz_{1,j}}}{e^{-\alpha_i\sum_{j=1}^i\l_i^Tz_{1,j}}+e^{-\alpha_i\sum_{j=1}^i\l_i^Tz_{2,j}}},\ x_{2,i+1}=1-x_{1,i+1}
\end{align*}
Dividing through and using the fact that $\sum_{j=1}^i(\l_i^Tz_{2,j}-\l_i^Tz_{1,j})=\R_{2,i}-\R_{1,i}$, this can be rewritten equivalently as,
\begin{align}
x_{1,i+1}=\frac{1}{1+e^{-\alpha_i(\R_{2,i}-\R_{1,i})}},\ x_{2,i+1}=1-x_{1,i+1} \label{eq:hedgeup}
\end{align}
Under update (\ref{eq:hedgeup}) for $x_{1,i+1}$ to reach value 0 or 1 we need $\R_{2,i}-\R_{1,i}\rightarrow \pm\infty$, unlike in Lemma \ref{lem:equil}.  Hence, $x_{1,i+1}$ at best converges only asymptotically to an extreme point of the simplex.   Over any finite time interval it therefore always places weight on both experts and so, on the face of it, it may seem unsuitable for achieving efficient regret.   

{
That said, suppose expert 2 has lower loss than expert 1. The regret for turn $i+1$ relative to expert 2 is  
\begin{align}
\frac{\l_{i+1}^T(z_{1,i+1} - z_{2,i+1})}{1+e^{-\alpha_{i}(\R_{2,i}-\R_{1,i})}} 
\label{re:hedgecond}
\end{align}
Provided $\R_{2,i}-\R_{1,i} \to -\infty$ sufficiently quickly, the above series converges giving 
 $O(1)$ regret relative to expert 2.  In particular, to make each term less than $\gamma_{i}$ it is enough to demand 
\begin{align*}
\R_{1,i} -\R_{2,i}\ge \frac{1}{\alpha_i}\log\left (\frac{\l_{i+1}^T(z_{1,i+1} - z_{2,i+1})}{\gamma_i}-1\right). 
\end{align*}
For example, taking $\alpha_i = 1/\sqrt i$ and the convergent series $\gamma_i = 1/i^2$ the above becomes $\R_{1,i} - \R_{2,i} \ge O(\sqrt i \log(i))$. We summarise these observations in the following lemma. 

\begin{lemma}[Hedge Gap]\label{lem:hedgegap} Suppose all $\l_{i}^T(z_{1,i} - z_{2,i}) \le L$ and for all $i \ge n_0$ we have $\R_{1,i} - \R_{2,i} \ge \frac{1}{\alpha_i}\log\left (\frac{L}{\gamma_i}-1\right)$.  Then the regret $\R_n$ of the Hedge update (\ref{eq:hedgeup}) satisfies $$\displaystyle  \R_n\le \R_{2,n}+\sum_{i=n_0}^n\gamma_i+L\max\{1,n_0\}.$$ 
The same holds with the the roles of $1$ and $2$ reversed.
\end{lemma}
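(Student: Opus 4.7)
Start from the identity
$$\R_n - \R_{2,n} = \sum_{i=1}^n \l_i^T(y_i - z_{2,i}) = \sum_{i=1}^n x_{1,i}\,\l_i^T(z_{1,i}-z_{2,i}),$$
obtained by substituting $y_i = x_{1,i}z_{1,i} + (1-x_{1,i})z_{2,i}$. Since $x_{1,i} \ge 0$, any summand with $\l_i^T(z_{1,i}-z_{2,i}) \le 0$ is non-positive and may be dropped when upper bounding; the remaining summands satisfy $\l_i^T(z_{1,i}-z_{2,i})\,x_{1,i} \le L\,x_{1,i}$ by hypothesis. The remaining work is to control $x_{1,i}$ in two regimes and sum.

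The easy regime is $i \le n_0$, where the trivial bound $x_{1,i} \le 1$ contributes at most $L n_0$. When $n_0 = 0$ the initial weight $x_{1,1}$ is still an arbitrary choice in $[0,1]$ rather than an output of update (\ref{eq:hedgeup}), so a single summand of size at most $L$ must be accounted for separately; this is the origin of the $L \max\{1,n_0\}$ factor in the conclusion.

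The key regime is $i \ge n_0 + 1$. Under update (\ref{eq:hedgeup}) we have
$$x_{1,i} = \frac{1}{1+e^{\alpha_{i-1}(\R_{1,i-1}-\R_{2,i-1})}},$$
so $x_{1,i}$ is determined by the cumulative regrets through step $i-1 \ge n_0$, at which index the gap hypothesis applies. Exponentiating that hypothesis at index $i-1$ and adding $1$ gives $1 + e^{\alpha_{i-1}(\R_{1,i-1}-\R_{2,i-1})} \ge L/\gamma_{i-1}$, hence $x_{1,i} \le \gamma_{i-1}/L$ and the per-step contribution is at most $\gamma_{i-1}$. Reindexing and extending by the non-negative term $\gamma_n$ bounds the tail by $\sum_{j=n_0}^{n-1}\gamma_j \le \sum_{i=n_0}^n \gamma_i$.

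Adding the two bounds yields $\R_n \le \R_{2,n} + \sum_{i=n_0}^n \gamma_i + L\max\{1,n_0\}$, and the reversed-role statement follows by relabelling the two experts, since both the update (\ref{eq:hedgeup}) and the identity above are symmetric in the indices $1,2$. There is no substantive obstacle here; the only points requiring care are the one-step offset between $x_{1,i}$ and the cumulative regrets through step $i-1$ that determine it, and the initialisation at $n_0 = 0$ (handled by the maximum with $1$).
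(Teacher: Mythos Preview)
Your proposal is correct and follows essentially the same route as the paper. The paper does not give a formal proof of this lemma; it derives the per-step regret expression $x_{1,i+1}\,\l_{i+1}^T(z_{1,i+1}-z_{2,i+1})$ in the text immediately preceding the statement and observes that the gap condition forces this to be at most $\gamma_i$, then simply states the lemma as a summary. Your write-up reproduces that argument with slightly more care about the one-step offset, the sign of $\l_i^T(z_{1,i}-z_{2,i})$, and the $\max\{1,n_0\}$ term for the initial action.
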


\noindent The above parallels Lemma \ref{lem:two} for the lazy subgradient method, although the details of the gap and the bounds on regret differ significantly.

Now we revisit Example \ref{ex1} in light of Lemma \ref{lem:hedgegap}. For  $\alpha_i = \eta/\sqrt i$   it can be verified that 
$$\alpha_i ( \R_{1,i}- \R_{2,i} )  = \frac{\eta}{\sqrt i} \left(  \sum_{j=1}^i \frac{1}{2 \sqrt j } -\frac{(-1)^i -1}{2}\right)  \to \eta$$ 
Hence any sequence $\gamma_i$ that satisfies the lemma must have $\gamma_i$ bounded from below, and the lemma only gives a $O(n)$ bound.   A more fine-grained analysis can tighten this to an $O(\sqrt{n})$ bound in Example \ref{ex1}.  It can be seen from Figure \ref{fig:ex1_fixed_hedge} that this $O(\sqrt{n})$ upper bound is attained.    

\begin{figure}
\centering
\subfigure[$\alpha_n=2/\sqrt{n}$]{
\includegraphics[width=0.45\columnwidth]{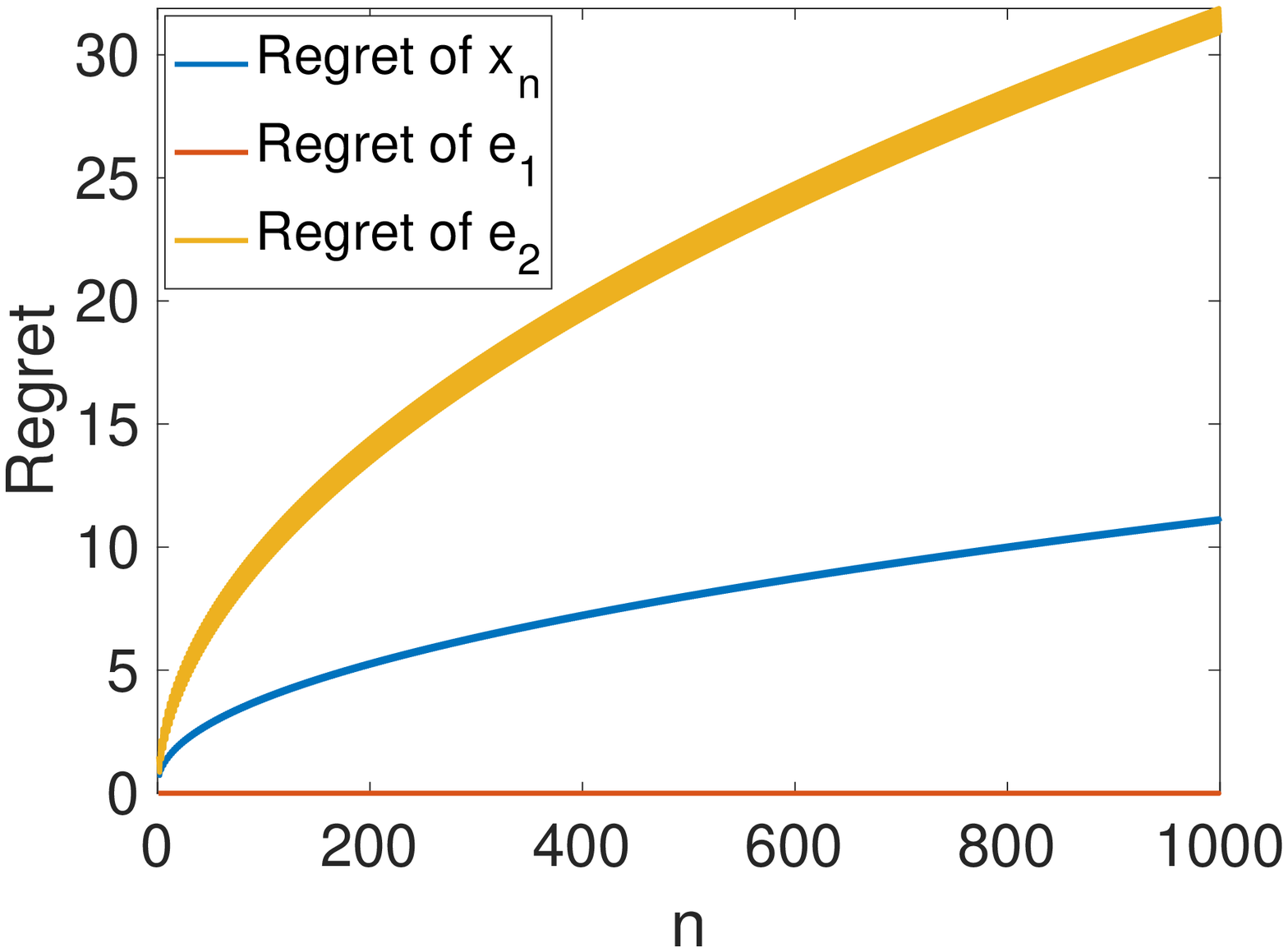}
}
\subfigure[$\alpha_n=5/\sqrt{n}$]{
\includegraphics[width=0.45\columnwidth]{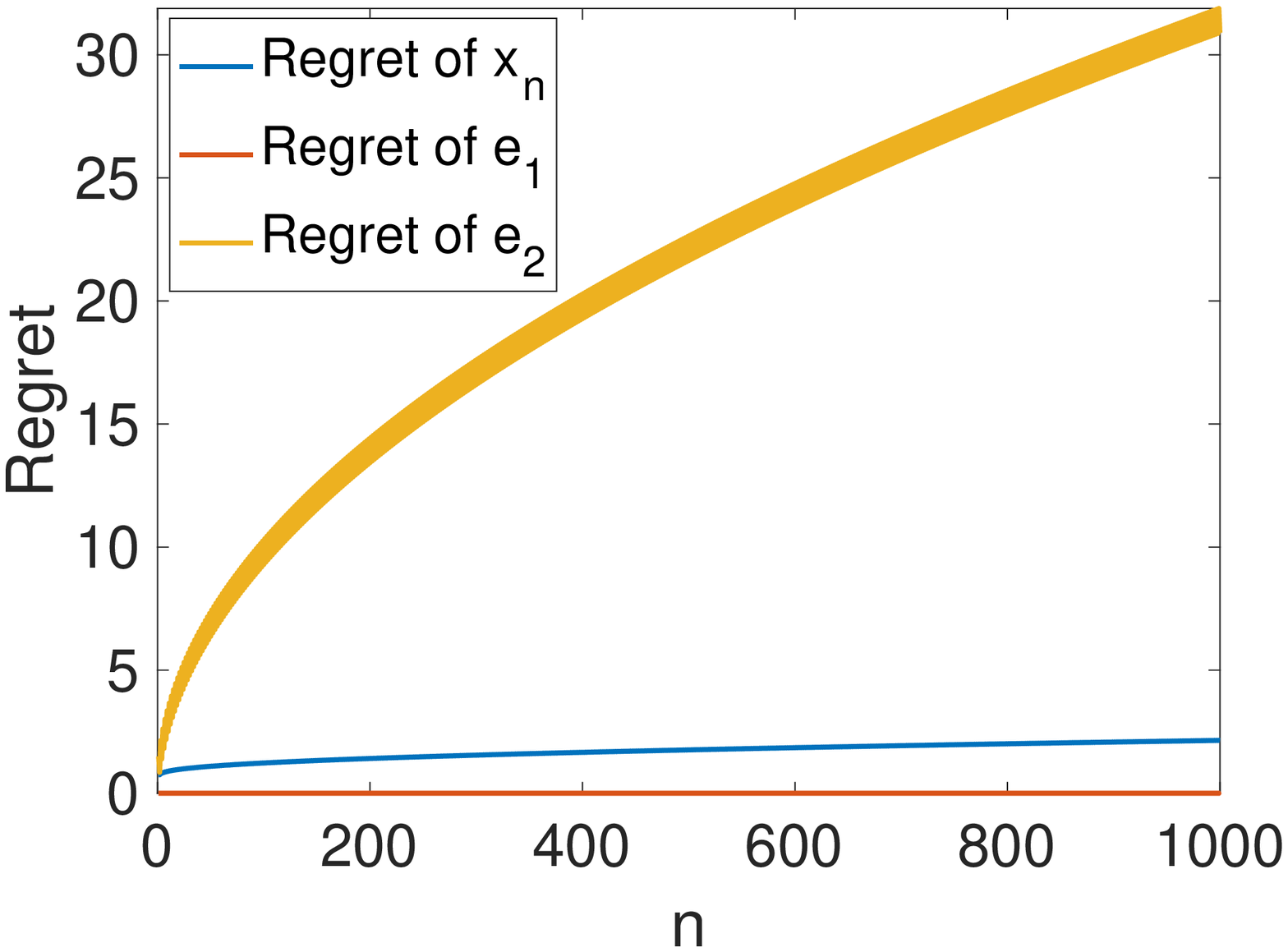}
}
\caption{Performance of the Hedge algorithm in Example \ref{ex1} vs choice of step size. }\label{fig:ex1_fixed_hedge}
\end{figure}

The crux of the difficulty is that to keep the sum $\sum_{i=n_0}^n\gamma_i$ small, $\gamma_i$ has to decrease very quickly, indeed faster than $1/i$ to keep the sum constant, and it is easy to devise examples for which this does not hold.   
Of course we can try to rectify this by adding a bias, similarly to the approach in Section \ref{sec:bias}, or adapting the step size but overall the behaviour of Hedge seems messier than that of the lazy subgradient algorithm due to the inabilility of Hedge to settle on an extreme point in finite time.
}

\section{Manipulating Initial Conditions in Prod and Hedge Algorithms}\label{sec:prodhedge}

The closest work to this is probably that of \cite{ABprod}.   We consider their approach in detail next.  In summary, the mechanism used is substantially different from the gap mechanism in the lazy subgradient approach.  Instead a special choice of initial conditions is used to bias the Prod algorithm towards a favoured expert (a similar approach can also be used with the Hedge algorithm, as we show in Section \ref{sec:hedge}).   The result is highly asymmetrical performance and so it is important to know in advance which expert potentially has lower regret and to know the time horizon in advance.

\subsection{Prod Algorithm}
The Prod algorithm introduced by \cite{cesa-bianchi_improved_2007} uses the following update when there are two actions on the simplex,
\begin{align*}
x_i=w_i/(w_{1,i}+w_{2,i}),\ w_{1,i} = w_{1,i-1}(1+\eta u_{1,i}),\ w_{2,i} = w_{2,i-1}(1+\eta u_{2,i})
\end{align*}
where $\eta>0$ is a design parameter and $u_{k,i}$ is the reward (rather than loss) gained by taking action $k$ at step $i$.   While \cite{cesa-bianchi_improved_2007} consider initial values $w_{1,1}=w_{2,1}=1$ their analysis is readily generalised to other initialisations.  In particular, selecting $w_{1,1}>0$, $w_{2,1}=1-w_{1,1}$ then the analysis of \cite{cesa-bianchi_improved_2007} shows that the cumulative reward satisfies,
\begin{align}
 \sum_{i=1}^n u_i^Tx_i \ge \max\{U_1,U_2\} \label{eq:prodreward}
\end{align}
where $U_k:=\frac{\log w_{k,1}}{\eta} + \sum_{i=1}^n u_{k,i}-\eta \sum_{i=1}^n u_{k,i}^2$.  Following \cite{ABprod}, select $u_{1,i}=\l_i^T(z_{2,i}-z_{1,i})$ and $u_{2,i}=0$ (thus $w_{2,i}=w_{2,1}$ for all $i$).  Plugging these choices into (\ref{eq:prodreward}) and rearranging then yields {
\begin{align*}
\R_n \le \min\left\{\R_{1,n}-\frac{\log w_{1,1}}{\eta} +\eta C,\R_{2,n}-\frac{\log (1-w_{1,1})}{\eta} \right\}
\end{align*}
where $C$ upper bounds $\sum_{i=1}^n \l_i^T(z_{2,i}-z_{1,i})^2$ (e.g. select $C=n\max_i |\l_i^T(z_{2,i}-z_{1,i})^2|$), $\R_n=\sum_{i=1}^n \l_i^T(z_{1,i}x_{1,i}+z_{2,i}(1-x_{1,i})-y^*)$ and $\R_{k,n}=\sum_{i=1}^n \l_i^T(z_{k,i}-y^*)$.  Selecting $\eta=\gamma/\sqrt{C}$ with $\gamma>0$ it follows that 
\begin{align*}
\R_n \le \min\left \{\R_{1,n}-\frac{\log w_{1,1}}{\gamma}\sqrt{C} +\gamma \sqrt{C},\R_{2,n}-\frac{\log (1-w_{1,1})}{\eta} \right \}
\end{align*}
Observe that the regret $\R_n$ appears to scale with $\sqrt{C}$ and that in general we expect $C$ to scale with $n$.   However, \cite{ABprod} make the key observation that $-\frac{\log (1-\eta)}{\eta} \le 2\log 2$ for $\eta\in(0,1/2)$.   Hence, selecting $w_{1,1}=\eta = \gamma /\sqrt C$ yields 
\begin{align}
\R_n \le \min \left \{\R_{1,n}-\frac{\log\gamma-\log\sqrt{C}}{\gamma}\sqrt{C} +\gamma \sqrt{C},\R_{2,n}+2\log 2 \right \}\label{eq:prod}
\end{align}
That is, this special choice of initial condition removes the scaling with $\sqrt{C}$ in the second term on the RHS, which becomes $\R_{2,n}+2\log 2$.  Selecting $\gamma=\sqrt{\log C}/2$, which corresponds to the $(A,B)$-Prod algorithm of \cite{ABprod}, simplifies the bound to \begin{align*}\R_n &\le \min\left \{\R_{1,n}+\left (2 \log 2 + \frac{1}{2} \right )\sqrt{C\log C},\R_{2,n}+2\log 2 \right \}\\ &\le \min\{\R_{1,n}+2\sqrt{C\log C},\R_{2,n}+2\log 2 \} 
\end{align*} }

The key insight here is that it is the choice of initial condition that is doing all the heavy lifting.  This is not just an artefact of the analysis but reflects actual algorithm behaviour, as illustrated by the following example.

\begin{figure}
\centering
\subfigure[]{
\includegraphics[width=0.45\columnwidth]{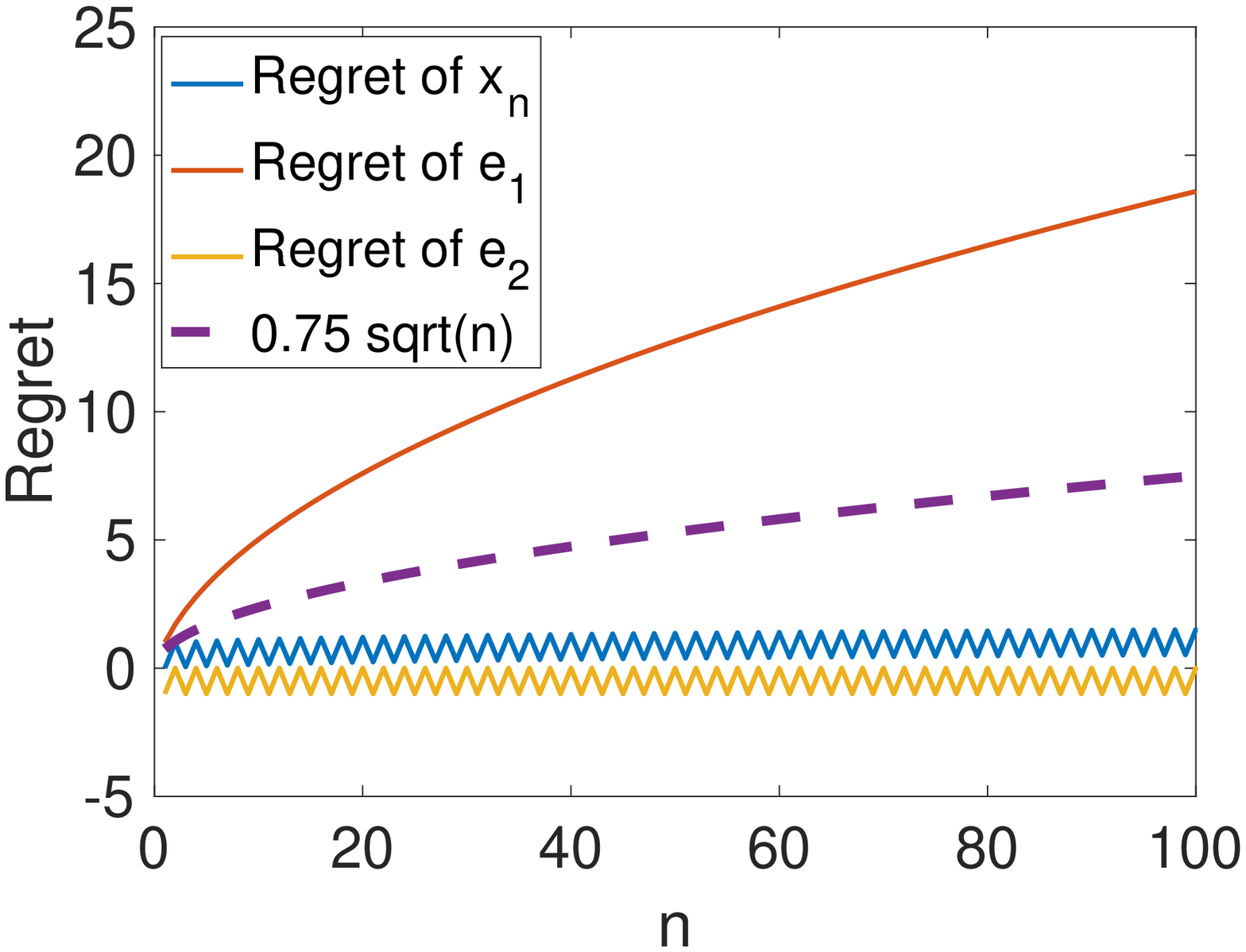}
}
\subfigure[]{
\includegraphics[width=0.45\columnwidth]{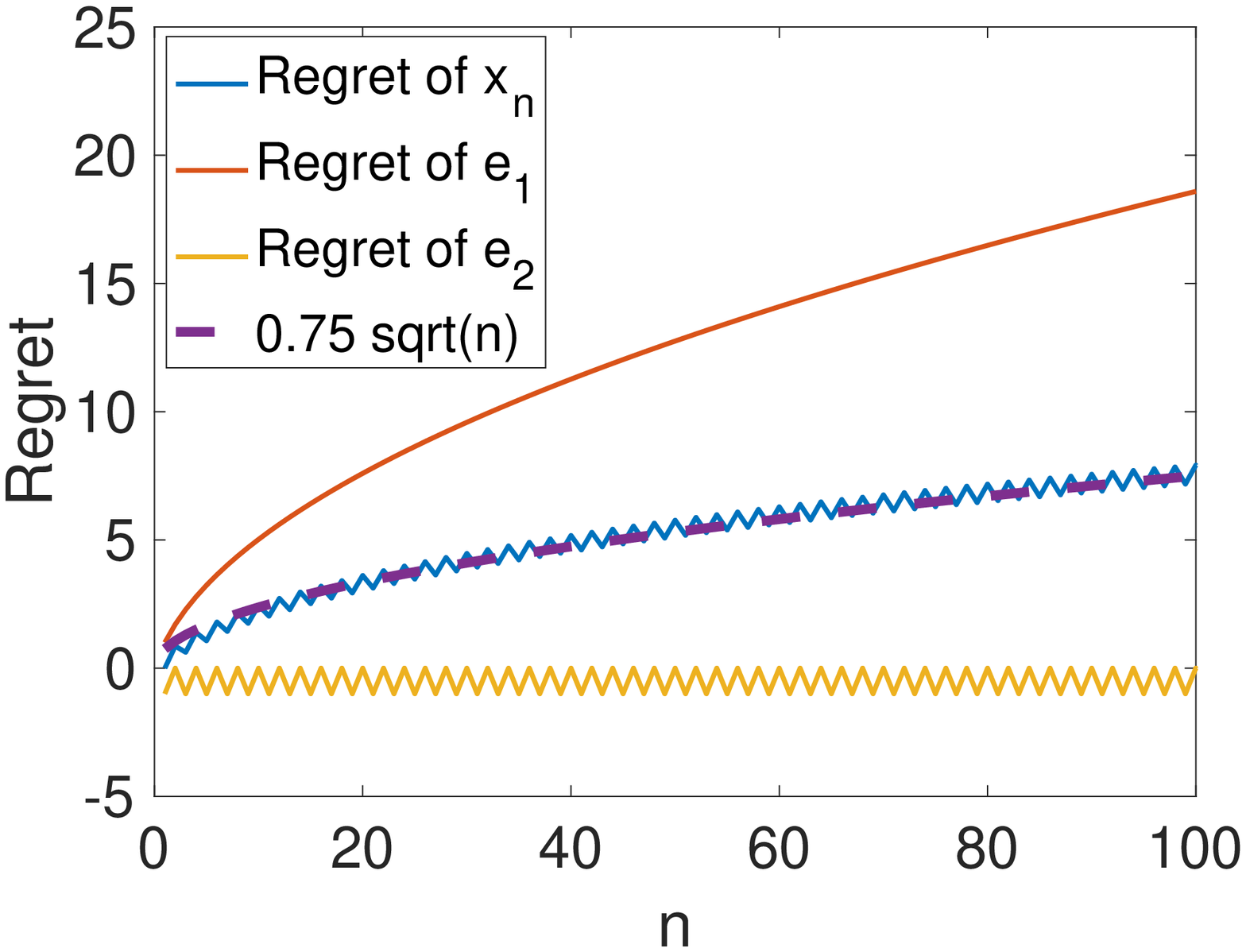}
}
\caption{Illustrating the sensitivity of $(A,B)$-Prod to choice of initial conditions.   In left-hand plot the initial condition is $w_{1}=(\eta,1-\eta)$, which gives constant loss.  In the right-hand plot the initial condition is changed to be $w_1=(0.5,0.5)$ while keeping everything else unchanged.  It can be seen that this change results in $\Theta(\sqrt{n})$ loss.}\label{fig:prod_initconds}
\end{figure}
\begin{example}\label{prod_initconds}
Suppose $\l_{1,i}=1/\sqrt{i}$, $\l_{2,i}=(-1)^{i+1}$ and $z_{1,i}=(1,0)$, $z_{2,i}=(0,1)$ and $y^*=(0,1)$.   Figure \ref{fig:prod_initconds}(a) shows the regret when using the $(A,B)$-Prod method with initial condition $w_{1,1}=\eta$, $w_{2,1}=1-\eta$.   Figure \ref{fig:prod_initconds}(b) shows the regret when the initial condition is changed to $w_{1,1}=w_{2,1}=0.5$.  It can be seen that in the first case the regret is $\Theta(1)$ while in the second the regret is $\Theta(\sqrt{n})$.  Note that the only change made here is in the initial condition.   

\end{example}

Observe also that the $(A,B)$-Prod regret bound (\ref{eq:prod}) is asymmetric.  Namely, it is useful when we have a situation where one expert has $\Theta(\sqrt{n})$ regret and the other expert may have lower regret if the data is favourable, plus we know in advance which of the experts may have lower regret.    We can then order the experts so that the expert which may have low regret corresponds to $z_{1,i}$ and the $\Theta(\sqrt{n})$ expert corresponds to $z_{2,i}$.   This ordering of the experts matters, namely if $z_{2,i}$ happens to achieve less than $\Theta(\sqrt{n})$ regret and $z_{2,i}$ has $\Theta(\sqrt{n})$ regret then the regret of $(A,B)$-Prod may be $\Theta(\sqrt{n})$.  The following example shows that this sensitivity to ordering is not just a deficiency of the (analysis but can actually occur.  

\begin{figure}
\centering
\subfigure[]{
\includegraphics[width=0.45\columnwidth]{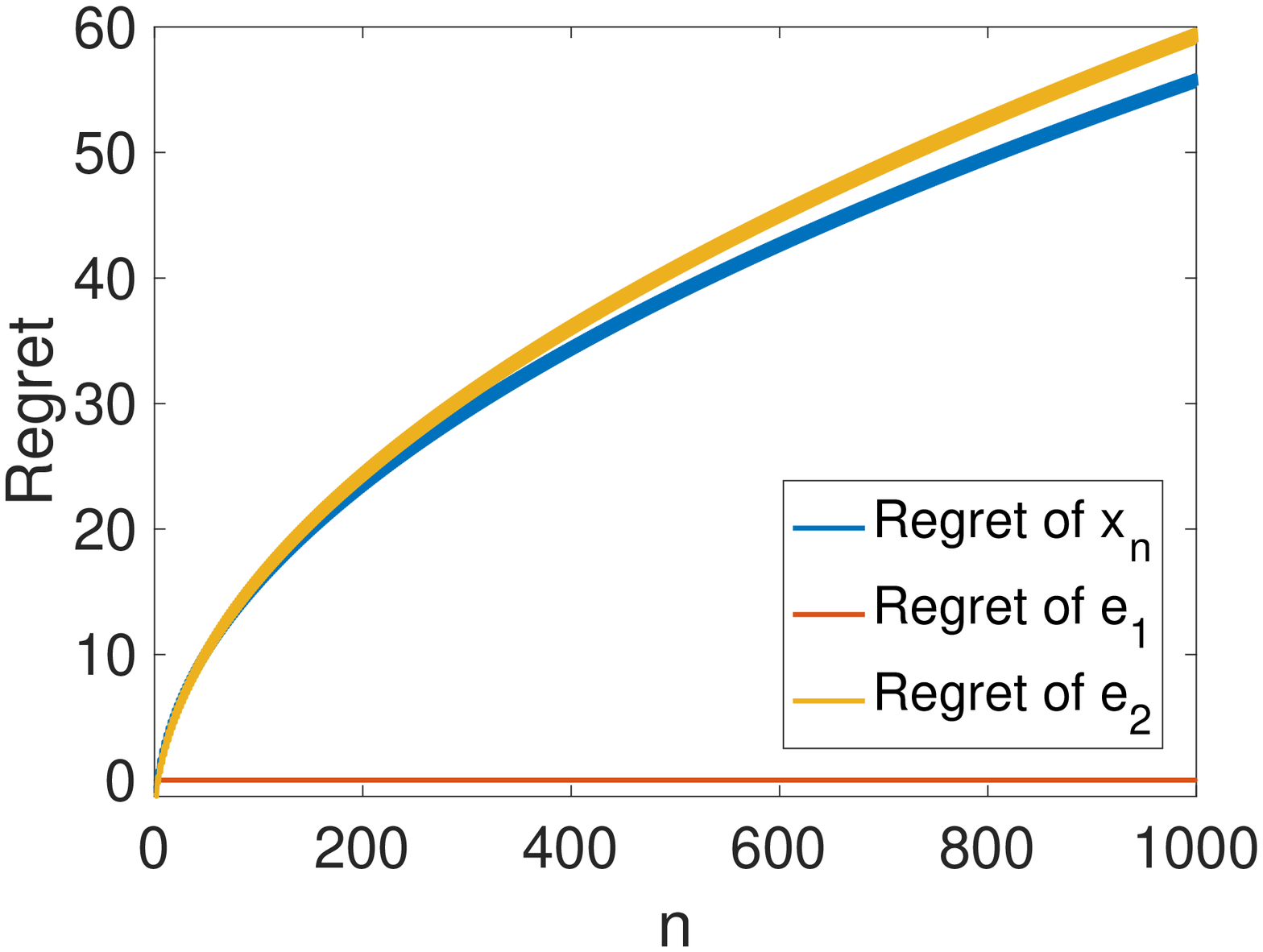}
}
\subfigure[]{
\includegraphics[width=0.45\columnwidth]{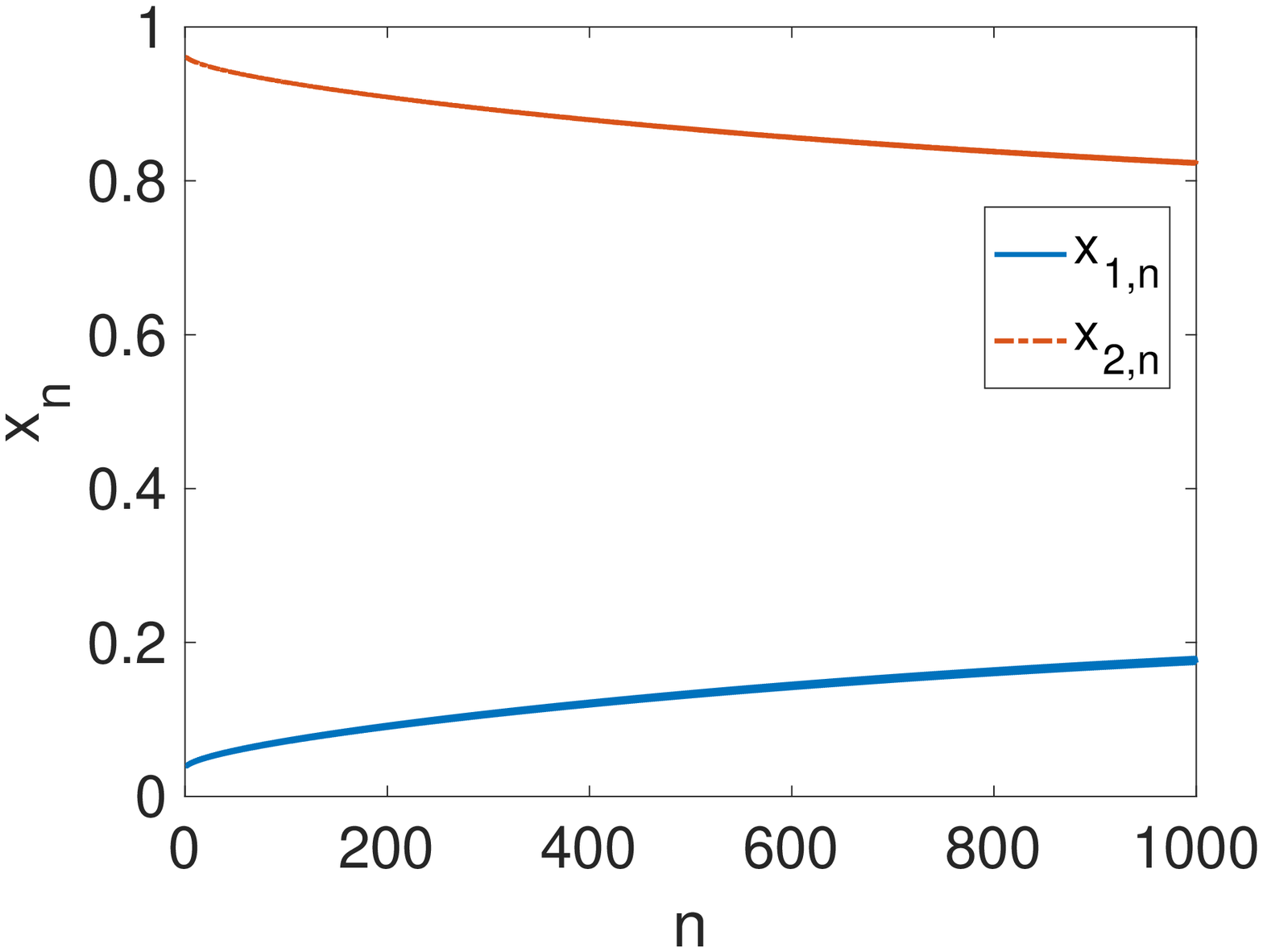}
}
\caption{Example \ref{ex3} of $(A,B)$-Prod asymmetry.  Left hand plot shows the regret of the combined action $x_n$ taken by $(A,B)$-Prod and also the regret of experts 1 and 2 with respect to expert 1.  Right hand plots action $x_n$ taken vs time.}\label{fig:ex3}
\end{figure}
\begin{figure}
\centering
\subfigure[]{
\includegraphics[width=0.45\columnwidth]{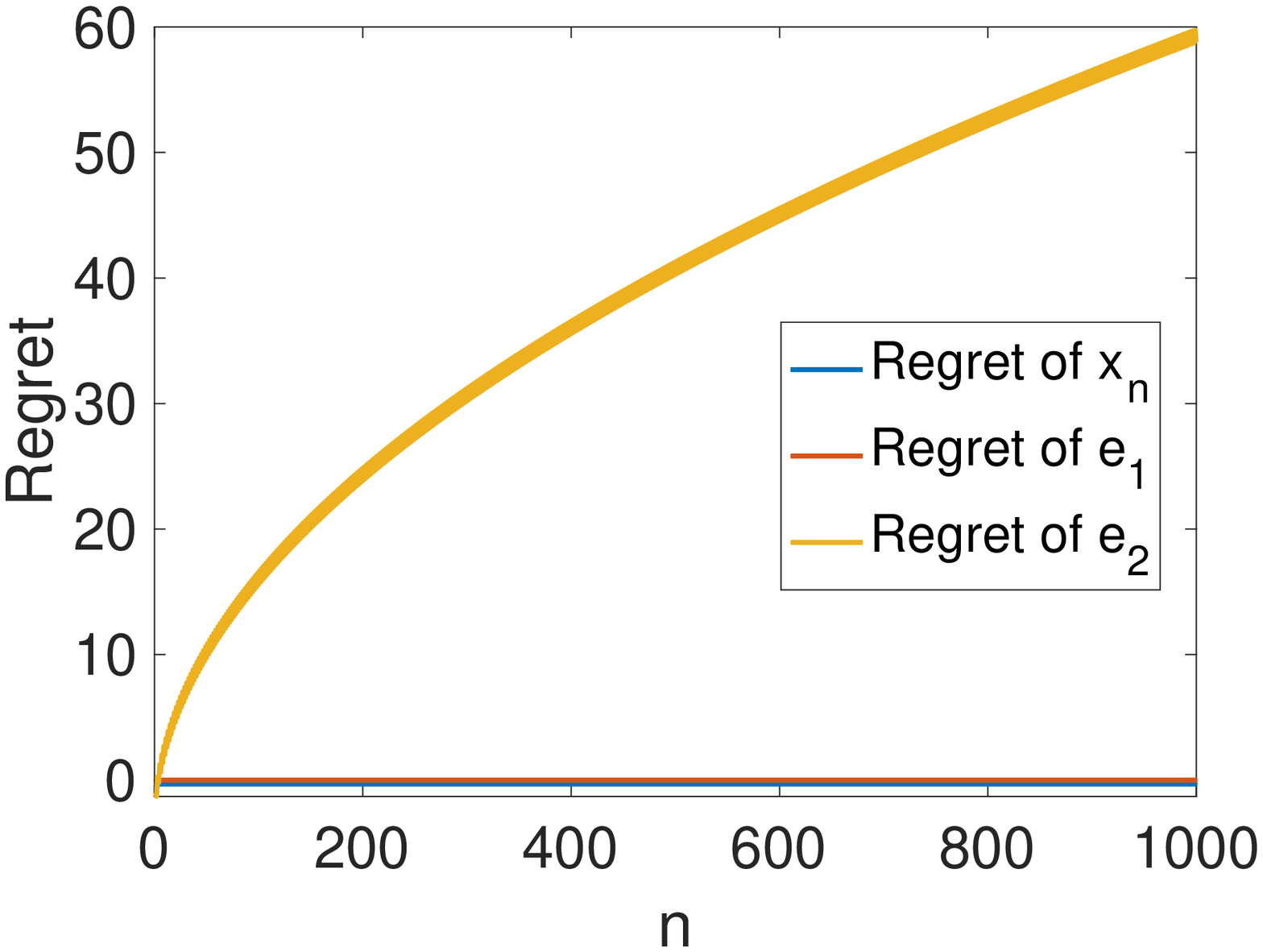}
}
\subfigure[]{
\includegraphics[width=0.45\columnwidth]{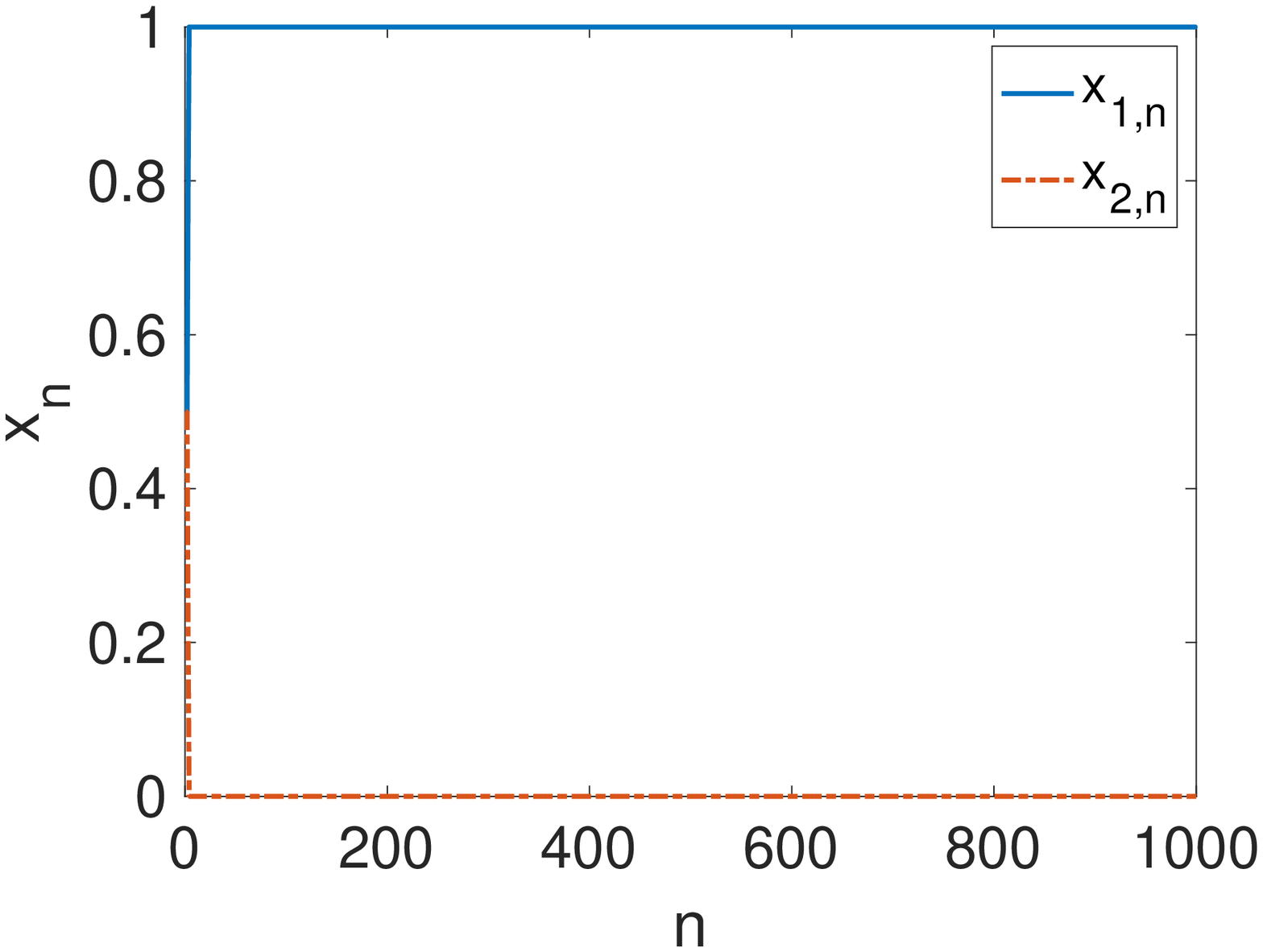}
}
\caption{Performance of biased subgradient method (\ref{eq:alg2a}) with $A_i=-(\sqrt{i}+\log i)$ in Example \ref{ex3}.  Left hand plot shows the regret of the combined action $x_n$ taken by the biased subgradient method and also the regret of experts 1 and 2 with respect to expert 1.  Right hand plots action $x_n$ taken vs time.}\label{fig:ex3_subgrad}
\end{figure}
\begin{example}\label{ex3}
Consider Example \ref{prod_initconds} but with the losses flipped i.e. $\l_{1,i}=(-1)^{i+1}$ and $\l_{2,i}=1/\sqrt{i}$.   Now $y^*=(1,0)$ and the regret of the second expert is $\Theta(\sqrt{n})$.  Figure \ref{fig:ex3}(a) shows the regret when these experts are combined using the $(A,B)$-Prod method.  It can be seen that the regret grows as $\Theta(\sqrt{n})$ even though expert 1 has no regret.    Figure \ref{fig:ex3}(b) plots $x_n$ vs time.  It can be seen that the difficulty arises because $(A,B)$-Prod moves the action away from the $(0,1)$ point too slowly.  Note that Theorem \ref{lem:first} says that the behaviour is almost symmetric when using the biased subgradient method (\ref{eq:alg2a}) to combine experts.  In particular, if $z_{2,i}$ happens to achieve less than $\Theta(\sqrt{n})$ regret and $z_{2,i}$ has $\Theta(\sqrt{n})$ then the biased subgradient method will achieve the lower regret.  Figure \ref{fig:ex3_subgrad} illustrates this behaviour.
\end{example}



\subsection{Hedge Algorithm}\label{sec:hedge}
The above discussion for the Prod algorithm carries over to the Hedge algorithm (\cite{hedge97}) as follows.  Hedge with rewards uses the following update when there are two actions on the simplex { 
\begin{align*}
x_{1,i}=\frac{w_{1,i}}{w_{1,i}+w_{2,i}}\qquad  x_{2,i}=1-x_{1,i} \qquad  w_{1,i} = w_{1,1}e^{\eta \sum_{j=1}^{i-1}u_{1,j}} \qquad w_{2,i} = w_{2,1}e^{\eta \sum_{j=1}^{i-1}u_{2,j}}
\end{align*}
with $w_{1,1}>0$ and $w_{2,1}>0$.  For the modified Hedge update $x_{1,i}$ and $x_{1,i}$ are the same as above, but we change the reward $u_{1,j}$ used in the exponent to $u_{1,j}-\eta u_{1,j}^2$ giving weights
\begin{align}  w_{1,i} = w_{1,1}e^{\eta \sum_{j=1}^{i-1}(u_{1,j}-\eta u_{1,j}^2)} \qquad w_{2,i} = w_{2,1}e^{\eta \sum_{j=1}^{i-1}(u_{2,j}-\eta u_{2,j}^2)}.
\label{eq:exp3mod}
\end{align}
The Second-Order Hedge algorithm exhibits the following behaviour:}
\begin{lemma}[Second-Order Hedge]\label{lem:hedge}
For initial condition $w_{1,1}>0$, $w_{2,1}=1-w_{1,1}$ the cumulative reward of the modified Hedge update (\ref{eq:exp3mod}) satisfies,
\begin{align*}
\sum_{i=1}^{n-1} (u_{1,i}x_{1,i} + u_{2,i}x_{2,i}) \ge \max\{U_1,U_2\}
\end{align*}
where $U_k:= \frac{\log w_{k,1}}{\eta} +\sum_{i=1}^n u_{k,i}-\eta \sum_{i=1}^n u_{k,i}^2$.
\end{lemma}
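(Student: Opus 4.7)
My plan is to run the standard potential-function argument for exponential weights. I would define the potential $W_i := w_{1,i} + w_{2,i}$ and note that the chosen initial condition $w_{2,1} = 1 - w_{1,1}$ gives $W_1 = 1$. The strategy is then to sandwich $\log W_{\text{final}}$ between two quantities: a lower bound of $\eta U_k$ for each $k \in \{1,2\}$, and an upper bound of $\eta$ times the cumulative reward of the modified Hedge algorithm. Dividing through by $\eta$ and taking the max over $k$ yields the claim.

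For the lower bound, since $W_{\text{final}} \ge w_{k,\text{final}}$ for each $k$, substituting the closed form (\ref{eq:exp3mod}) directly gives
\begin{align*}
\log W_{\text{final}} \ge \log w_{k,1} + \eta\sum_{i} u_{k,i} - \eta^2 \sum_{i} u_{k,i}^2 = \eta\, U_k.
\end{align*}
For the upper bound I would write the per-step ratio $W_{i+1}/W_i = \sum_k x_{k,i}\, e^{\eta u_{k,i} - \eta^2 u_{k,i}^2}$ using $x_{k,i} = w_{k,i}/W_i$, then apply the elementary inequality $e^{x - x^2} \le 1 + x$ (valid for $x \ge -1/2$, and equivalent to the bound $\log(1+x) \ge x - x^2$ that underlies the Prod analysis recalled earlier for (\ref{eq:prodreward})) with $x = \eta u_{k,i}$. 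This gives $W_{i+1}/W_i \le 1 + \eta \sum_k x_{k,i} u_{k,i}$, and a further application of $\log(1+y) \le y$ followed by telescoping produces $\log W_{\text{final}} \le \eta \sum_{i} (u_{1,i} x_{1,i} + u_{2,i} x_{2,i})$. Chaining the two bounds and cancelling the $\eta$ recovers the statement.

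The main obstacle will be verifying the regime of validity of $e^{x - x^2} \le 1 + x$: it requires $\eta u_{k,i} \ge -1/2$, so I would either import this as a standing hypothesis on $\eta$ and the reward scale (mirroring the conditions under which (\ref{eq:prodreward}) was stated for Prod) or restrict to the parameter range where it is automatic. A second, very minor point is that the LHS sum in the lemma runs to $n-1$ while $U_k$ is written with sums up to $n$; this is just a matter of whether the lower bound is applied at the weights $w_{k,n}$ or $w_{k,n+1}$, and I would align the indices consistently in writing up the telescoping step rather than treat it as a real difficulty.
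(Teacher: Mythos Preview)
Your proposal is correct and follows essentially the same potential-function argument as the paper: define $W_i = w_{1,i}+w_{2,i}$, lower-bound $\log W_{\text{final}}$ by $\eta U_k$ via the closed form of the weights, upper-bound it by $\eta$ times the algorithm's cumulative reward using $e^{x-x^2}\le 1+x$ and $\log(1+y)\le y$, then combine. If anything, you are more careful than the paper about the range of validity of $e^{x-x^2}\le 1+x$ and about the $n$ versus $n{-}1$ indexing, both of which the paper simply passes over.
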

\begin{proof}
Let $W_i=w_{1,i}+w_{2,i}$.  Then,
\begin{align}
\log \frac{W_{n+1}}{W_1} &\ge \frac{\log w_{1,n+1}}{W_1}\ge-\log W_1 + \log w_{1,1} +\sum_{i=1}^n \eta (u_{1,i}-\eta u_{1,i}^2) \label{eq:w1exp3}
\end{align}
Note that by assumption $W_1=w_{1,1}+w_{2,1}=1$ and so $\log W_1=0$.  We also have that,
\begin{align*}
\frac{W_{i+1}}{W_i} 
&=x_{1,i}e^{\eta u_{1,j}-\eta^2 u_{1,j}^2}+x_{2,i}e^{\eta u_{2,j}-\eta^2 u_{2,j}^2}\\
&\stackrel{(a)}{\le} x_{1,i}(1+\eta u_{1,j} ) + x_{2,i}(1+\eta u_{2,j} )
\stackrel{(b)}{=}1+\eta (x_{1,i} u_{1,j} +x_{2,i} u_{2,j}) 
\end{align*}
where inequality $(a)$ follows from the identity $e^{x-x^2} \le 1+x$ (e.g. see  \cite{cesa-bianchi_improved_2007}) and equality $(b)$ from the fact that $x_{1,i}+x_{2,i}=1$.  Hence,
\begin{align*}
\log \frac{W_{n+1}}{W_1}& = \log\Pi_{i=1}^n \frac{W_{i+1}}{W_i} \le \eta \sum_{i=1}^{n-1}x_{1,i} u_{1,j} +x_{2,i} u_{2,j}
\end{align*}
where we have used the fact that $\log(1+x)\le x$.   Combining this expression with (\ref{eq:w1exp3}) yields the stated result.
\end{proof}
Lemma \ref{lem:hedge} is identical to (\ref{eq:prodreward}), and so the previous the analysis for the Prod algorithm now carries over unchanged and we can get the same asymmetric bound by a special choice of initial conditions.   Note that the existence of a close link between Hedge and Prod has also been previously noted by \cite{koolen_second-order_2015}, although the connection with $(A,B)$-Prod seems to be new.

\section{Summary and Conclusions} {
Standard online learning algorithms often fail to achieve efficient regret in easy examples.  However the biased lazy subgradient algorithm (\ref{eq:alg2a_subgrad}) can achieve efficient regret in such examples.} The Prod/Second-Order Hedge algorithms with appropriate choice of initial condition can also achieve efficient regret, but in a less clean way than with the lazy subgradient algorithm.

In this work we consider $\Theta(1/\sqrt{n})$ step sizes since these ensure $O(\sqrt{n})$ worst-case regret.  However, in light of work such as that of \cite{gaillard_second-order_2014} an obvious open question is whether use of an adaptive step size would yield improved efficiency, in particular shrinking of the regret gap required for a case to be counted as ``easy''. 

\section*{Acknowledgements}
 This work was supported by Science Foundation Ireland grant 16/IA/4610.

\section*{Appendix}

The following are straightforward variations on standard results but we were unable to find a suitable existing result in the literature that covers the exact conditions we need.

{
\begin{lemma}[Strong convexity]\label{lem:strong}
The actions generated by the biased subgradient update (\ref{eq:alg2a}) satisfy $\|x_{i+1}-x_i\|  \le   \frac{1+|a_{i+1}| }{\sqrt i} + \frac{1}{4   i} $.
\end{lemma}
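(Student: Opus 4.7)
The plan is to recognise update \eqref{eq:alg2a_subgrad} as a follow-the-regularised-leader (FTRL) step with quadratic regulariser $R_i(x)=\tfrac{\sqrt{i}}{2}\|x\|^2$. Since $P_\S(-\alpha_i u)=\arg\min_{x\in\S}\{\langle u,x\rangle+\tfrac{1}{2\alpha_i}\|x\|^2\}$ with $\alpha_i=1/\sqrt{i}$, the iterates are $x_{i+1}\in\arg\min_{x\in\S}\Phi_i(x)$ and $x_i\in\arg\min_{x\in\S}\Phi_{i-1}(x)$, where $\Phi_j(x):=\langle(A_j,B_j),x\rangle+R_j(x)$ is $\sqrt{j}$-strongly convex in the Euclidean norm. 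The classical FTRL stability argument based on strong convexity of the regulariser will then drive the bound on $\|x_{i+1}-x_i\|$.

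Concretely, I apply first-order optimality for $x_{i+1}$ tested against the feasible point $x_i$, and for $x_i$ tested against $x_{i+1}$, then sum the two inequalities. Using $\nabla\Phi_j(x)=(A_j,B_j)+\sqrt{j}\,x$ together with $(A_i,B_i)-(A_{i-1},B_{i-1})=(a_i,b_i)$, this collapses to
\[
\sqrt{i}\,\|x_{i+1}-x_i\|^2 \;\le\; \bigl\langle (a_i,b_i)+(\sqrt{i}-\sqrt{i-1})x_i,\; x_i-x_{i+1}\bigr\rangle.
\]
A Cauchy--Schwarz estimate, together with $\|(a_i,b_i)\|\le 1+|a_i|$ (from the standing hypothesis $\|b_i\|\le 1$) and $\|x_i\|\le 1$ (simplex bound), gives
\[
\|x_{i+1}-x_i\|\;\le\;\frac{1+|a_i|}{\sqrt{i}}+\frac{\sqrt{i}-\sqrt{i-1}}{\sqrt{i}}.
\]

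Finishing requires bounding the residual by $\tfrac{1}{4i}$. Writing $\sqrt{i}-\sqrt{i-1}=1/(\sqrt{i}+\sqrt{i-1})$, the residual equals $1/[\sqrt{i}(\sqrt{i}+\sqrt{i-1})]$, which is of order $1/i$. The main obstacle I anticipate is pinning down the precise constant $1/4$: the crude bound $\sqrt{i}+\sqrt{i-1}\ge\sqrt{i}$ yields only $1/i$, and $\sqrt{i}+\sqrt{i-1}\ge 2\sqrt{i-1}$ gives only $1/(2\sqrt{i(i-1)})$, still off by a factor of roughly two. To reach the stated constant I plan to refine the inner product using the one-dimensional parametrisation $x_i=(\tfrac12+\tilde q_i,\tfrac12-\tilde q_i)$ of the $2$-simplex (from the derivation preceding \eqref{eq:alg2a}), under which $\langle x_i,\,x_i-x_{i+1}\rangle=(2x_{1,i}-1)(x_{1,i}-x_{1,i+1})$ contributes an extra factor $|2x_{1,i}-1|\le 1$ that effectively halves the $\|x_i\|$ contribution in the Cauchy--Schwarz step; combined with the sharp asymptotic expansion $\sqrt{i}+\sqrt{i-1}=2\sqrt{i}+O(1/\sqrt{i})$, this should close the remaining factor.
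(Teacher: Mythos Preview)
Your approach is the same as the paper's at heart: both recognise the update as FTRL with regulariser $R_j(x)=\tfrac{\sqrt j}{2}\|x\|^2$, invoke optimality/strong convexity at two consecutive minimisers, and add the resulting inequalities. Summing first-order conditions (your route) and summing the strong-convexity inequalities $F(u)-F(x^\star)\ge\tfrac{\mu}{2}\|u-x^\star\|^2$ (the paper's route) produce the identical key estimate.

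Two points deserve comment.

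\emph{Indexing.} Your derivation yields $|a_i|$, whereas the stated bound has $|a_{i+1}|$. This mismatch originates in the paper, not in your argument: the paper declares $x_{i+1}=\arg\min F_i$ but then treats $x_i$ as the minimiser of $F_i$ and $x_{i+1}$ as the minimiser of $F_{i+1}$, which shifts the increment to $(a_{i+1},b_{i+1})$. With the indexing of \eqref{eq:alg2a_subgrad} taken literally, your $|a_i|$ is the internally consistent choice. Either version suffices for the application in Lemma~\ref{lem:ftrl}.

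\emph{The constant $1/4$.} You are right to flag this, but your proposed fix will not close the gap. The one-dimensional parametrisation gives $|\langle x_i,x_i-x_{i+1}\rangle|\le |x_{1,i}-x_{1,i+1}|=\|x_i-x_{i+1}\|/\sqrt 2$, so you gain only a factor $1/\sqrt 2$; combined with $\sqrt i+\sqrt{i-1}\le 2\sqrt i$ the residual is still at least $1/(2\sqrt 2\, i)>1/(4i)$, and for small $i$ (e.g.\ $i=1$) it is much larger. In fact the paper reaches $1/4$ only via the identity ``$\|x_i\|^2-\|x_{i+1}\|^2=\tfrac12(x_i+x_{i+1})^T(x_i-x_{i+1})$'', whose factor $\tfrac12$ is spurious (the correct polarisation has no such factor). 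With the correct identity the paper's own chain yields $\tfrac{1}{2i}$, not $\tfrac{1}{4i}$. Since the only downstream use of this term is $\sum_{i\le n}\tfrac{1}{4i}\le\tfrac{\log n}{4}$, absorbed into the $3\sqrt n$ of Lemma~\ref{lem:ftrl}, any $O(1/i)$ residual is adequate; your direct Cauchy--Schwarz bound $\tfrac{1}{\sqrt i(\sqrt i+\sqrt{i-1})}\le \tfrac{1}{i}$ already does the job, and the refinement is unnecessary.
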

\begin{proof}
We adapt the proof of of Lemma 2.10 in \cite{Purple1} to the present setting where the regulariser changes at each iteration.  Observe that $x_{i+1}=\arg\min_{x\in \S} \|x+\alpha_{i}(A_{i},B_{i})\|^2$.  Expanding the square and dropping terms that do not depend on $x$ it follows that $x_{i+1}=\arg\min_{x\in \S} \|x\|^2+2\alpha_{i}(A_{i},B_{i})^Tx=\arg\min_{x\in \S} F_i(x)$ for $F_i(x) = R_{i}(x)+(A_{i},B_{i})^Tx$ and $R_i(x)=\frac{\sqrt{i}}{2}\|x\|^2$.   
Since each $\|u\|^2=\|u-x_i+x_i\|^2 = \|u-x_i\|^2 +2x_i^T(u-x_i)+\|x_i\|^2$ the definition of $R_i$ gives
\begin{align}
R_i(u) -R_i(x_i)  = \sqrt{i}x_i^T(u-x_i) + \frac{\sqrt{i}}{2}\|u-x_i\|^2\\
F_i(u)-F_i(x_i) = (\sqrt{i}x_i+(A_{i},B_{i}))^T(u-x_i)+ \frac{\sqrt{i}}{2}\|u-x_i\|^2\\
=  \partial F_i(x_i)^T(u-x_i)+ \frac{\sqrt{i}}{2}\|u-x_i\|^2\label{SC}
\end{align}
where the last line follows from $\partial F_i(x_i)^T(u-x_i) = (\sqrt{i}x_i+(A_{i},B_{i}))^T(u-x_i)$. We claim the first term on the right is nonnegative.  

Since $x_i$ is a minimiser of $F_i$ the negative of the gradient $-\partial F_i(x_i)$ is normal to the domain $\S$. Since $\S$ is convex it is contained in the half-space $\{x \in \mathbb R ^2:    -\partial F_i(x_i) ^T x \le \partial  F_i(x_i) ^T x_i  \}$. In particular $-\partial F_i(x_i) ^T u \le \partial  F_i(x_i) ^T x_i$ and so $\partial F_i(x_i) ^T (u-x_i) \ge 0$ as required. Thus (\ref{SC}) gives 
\begin{align}
F_i(u)-F_i(x_i) \ge \frac{\sqrt{i}}{2}\|u-x_i\|^2\label{SC1}
\end{align}
Setting $u=x_{i+1}$ we get $F_{i}(x_{i+1}) -F_{i}(x_{i})\ge\frac{\sqrt{i}}{2}\|x_{i+1}-x_i\|^2$. Since (\ref{SC1}) holds for all $i$ and $u$ it holds for $i+1$ and  $u = x_i$. Hence we get  $F_{i+1}(x_i)- F_{i+1}(x_{i+1})\ge \frac{\sqrt{i+1}}{2}\|x_i-x_{i+1}\|^2$.  Summing these two inequalities and rearranging gives 
\begin{align*}F_{i}(x_{i+1})  - F_{i+1}(x_{i+1})+ F_{i+1}(x_i)-F_{i}(x_{i})   \ge\frac{\sqrt{i}+\sqrt{i+1}}{2}\|x_{i+1}-x_i\|^2 \ge \sqrt{i}\|x_{i+1}-x_i\|^2.
\end{align*}
For the first pair on the left $F_{i}(x_{i+1})-F_{i+1}(x_{i+1})=-(a_{i+1},b_{i+1})x_{i+1}+\frac{1}{2}(\sqrt{i}-\sqrt{i+1})\|x_{i+1}\|^2$. For the second pair $F_{i+1}(x_i)-F_{i}(x_{i})=(a_{i+1},b_{i+1})x_i+\frac{1}{2}(\sqrt{i+1}-\sqrt{i})\|x_{i}\|^2$.  Hence
\begin{align}
\sqrt{i}\|x_{i+1}-x_i\|^2 
&\le 
(a_{i+1},b_{i+1})(x_i-x_{i+1}) +\frac{1}{2}(\sqrt{i}-\sqrt{i+1})(\|x_{i+1}\|^2-\|x_{i}\|^2)\\
&\le \|(a_{i+1},b_{i+1})\|\|x_i-x_{i+1}\| +\frac{1}{\sqrt{i+1}+\sqrt i}\frac{ \|x_{i}\|^2-\|x_{i+1}\|^2}{2}\\
&\le  ( |a_{i+1}| +1)\|\|x_i-x_{i+1}\| +\frac{1}{2 \sqrt i}\frac{ \|x_{i}\|^2-\|x_{i+1}\|^2}{2}
\end{align}
For the last term we use the parallelogram law 
\begin{align*}\|x_{i}\|^2 - \|x_{i+1}\|^2 = \frac{(x_{i}+x_{i+1})^T(x_{i}-x_{i+1})}{2}\le   \frac{\|x_{i}+x_{i+1}\|\|x_{i}-x_{i+1}\|}{2}\\  \le  \frac{(\|x_{i}\|+ \|x_{i+1}\|)\|x_{i}-x_{i+1}\|}{2} \le \|x_{i}-x_{i+1}\|
\end{align*} to get  $ \displaystyle 
\sqrt{i}\|x_{i+1}-x_i\|^2 \le \left ( |a_{i+1}| +1 + \frac{1}{4 \sqrt i} \right )\|x_i-x_{i+1}\| $
When $\|x_i-x_{i+1}\|=0$ the result is trivial.  Otherwise divide through by $\sqrt i \|x_i-x_{i+1}\|$ to get $ \displaystyle 
 \|x_{i+1}-x_i\|  \le   \frac{|a_{i+1}| +1}{\sqrt i} + \frac{1}{4   i} $ as required.
\end{proof}}


\begin{lemma}[FTRL]\label{lem:ftrl} 
Under update (\ref{eq:alg2a}) with $\alpha_i=1/\sqrt{i}$ the regret {satisfies} $$\sum_{i=1}^n (a_i,b_i)^T(x_i-x^*) \le \R_n(x^*)-\R_1(x_2)+\sum_{i=1}^{n}(a_i,b_i)^T(x_i-x_{i+1})$$ where $R_i(x)=\frac{\sqrt{i}}{2}\|x\|^2$.
\end{lemma}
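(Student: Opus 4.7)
The plan is to recognize the update as Follow-The-Regularised-Leader (FTRL) with time-varying regulariser and then run a standard ``be-the-leader'' induction, taking care that monotonicity of the regularisers in $i$ is what closes the argument.

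First I would rewrite the projection as a minimisation. Since $P_\S(w) = \arg\min_{x \in \S} \|x-w\|^2$, expanding the square and dropping terms independent of $x$ gives
\begin{align*}
x_{i+1} = P_\S\!\left(-\alpha_i(A_i,B_i)^T\right) = \arg\min_{x \in \S}\left( R_i(x) + \sum_{j=1}^i (a_j,b_j)^T x \right),
\end{align*}
after multiplying through by $\sqrt{i}$ (using $\alpha_i = 1/\sqrt{i}$) and collecting the telescoping sum $\sum_{j=1}^i(a_j,b_j)=(A_i,B_i)$. So $x_{i+1}$ is the FTRL iterate with losses $\ell_i(x):=(a_i,b_i)^T x$ and regulariser $R_i$.

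Next I would note that, after rearrangement, the claimed inequality is equivalent to
\begin{align*}
\sum_{i=1}^n \ell_i(x_{i+1}) + R_1(x_2) \le \sum_{i=1}^n \ell_i(x^*) + R_n(x^*),
\end{align*}
which I would prove by induction on $n$. The base case $n=1$ is just the defining optimality of $x_2$, namely $\ell_1(x_2)+R_1(x_2)\le \ell_1(x^*)+R_1(x^*)$. For the inductive step, apply the $n-1$ inequality at the specific point $x^*=x_{n+1}$, add $\ell_n(x_{n+1})$ to both sides, then use the optimality of $x_{n+1}$, which says $\sum_{i=1}^n \ell_i(x_{n+1}) + R_n(x_{n+1}) \le \sum_{i=1}^n \ell_i(x^*) + R_n(x^*)$. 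Combining these produces the desired bound up to the extra term $R_{n-1}(x_{n+1}) - R_n(x_{n+1})$; for the proof to close we need this term to be non-positive, i.e.\ $R_n \ge R_{n-1}$ pointwise, which holds here because $R_i(x) = \tfrac{\sqrt{i}}{2}\|x\|^2$ is increasing in $i$ at every point of $\S$.

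Finally I would recover the stated form by adding $\sum_{i=1}^n \ell_i(x_i)$ to both sides of the rearranged inequality, producing the $\sum_{i=1}^n (a_i,b_i)^T(x_i-x_{i+1})$ stability term on the right and $\sum_{i=1}^n (a_i,b_i)^T(x_i-x^*)$ on the left. The main (and really only) obstacle is handling the time-varying regulariser in the induction, but the requirement reduces to pointwise monotonicity of $R_i$ in $i$, which is immediate from the explicit form $R_i(x)=\tfrac{\sqrt{i}}{2}\|x\|^2$; so no technical surprises beyond the usual FTRL template are expected.
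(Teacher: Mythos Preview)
Your proposal is correct and follows essentially the same FTRL/``be-the-leader'' template as the paper. The only cosmetic difference is that the paper introduces regulariser increments $q_j(x)=R_j(x)-R_{j-1}(x)$, runs the induction on the augmented losses $r_j=q_j+\ell_j$ to get $\sum_{j=1}^i r_j(x_{j+1})\le\sum_{j=1}^i r_j(u)$, and only at the end uses $q_j\ge 0$ to drop the $\sum_{j\ge 2}q_j(x_{j+1})$ terms; you instead fold the monotonicity $R_{n-1}\le R_n$ directly into each inductive step, which is an equivalent reorganisation of the same argument.
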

\begin{proof}
We follow the usual approach, slightly generalised to encompass our setting.  Note that $\sum_{j=1}^ia_i=A_i$ and $\sum_{j=1}^ib_i=B_i$.   Let $q_i(x)=R_{i}(x)-R_{i-1}(x)$ with $q_1(x)=R_1(x)$ and again note that $\sum_{j=1}^{i}q_j(x)=R_i(x)$.   Observe that $x_{i+1}=\arg\min_{x\in \S} \|x+\alpha_{i}(A_{i},B_{i})\|^2$.  Expanding the square and dropping terms that do not depend on $x$ it follows that $x_{i+1}=\arg\min_{x\in \S} \|x\|^2+2\alpha_{i}(A_{i},B_{i})^Tx=\arg\min_{x\in \S} R_{i}(x)+(A_{i},B_{i})^Tx = \arg\min_{x\in \S}\sum_{j=1}^{i} (q_j(x)+(a_j,b_j)^Tx) $.{ We conclude $ x_{i+1} \in \arg\min_{x\in \S} \sum_{j=1}^i r_j(x) $ for $r_j(x)=q_j(x)+(a_j,b_j)^Tx.$ Next we claim for all $u \in \S$ that   
\begin{align}
\sum_{j=1}^{i} r_j(x_{i+1}) \le \sum_{j=1}^{i} r_j(u).\label{eq:min}
\end{align}  We proceed by induction.  For $i=1$ we have $\sum_{j=1}^{i} r_j(x_{i+1}) = \sum_{j=1}^{1} r_j(x_2)$. Since $x_{2}$ minimises the right-hand side (\ref{eq:min}) holds. Now suppose   $\sum_{j=1}^{i-1} r_j(x_{j+1})\le \sum_{j=1}^{i-1} r_j(u)$.  Then
\begin{align}
\sum_{j=1}^{i} r_j(x_{j+1})\le r_{i}(x_{i+1})+\sum_{j=1}^{i-1} r_j(u)
\end{align}
This holds for all $u$, and so in particular for $u=x_{i+1}$.  Hence,
\begin{align}
\sum_{j=1}^{i} r_j(x_{j+1})\le \sum_{j=1}^{i} r_j(x_{i+1}) \stackrel{(a)}{\le} \sum_{j=1}^{i} r_j(u)
\end{align}
where $(a)$ follows how $ x_{i+1} \in \arg\min_{x\in \S} \sum_{j=1}^i r_j(x) $.   We conclude that $\sum_{j=1}^{i} r_j(x_{j+1})\le \sum_{j=1}^{i} r_j(u)$ for all $i=1,2,\dots$.}

Adding  $\sum_{j=1}^{i} r_j(x_{j})$ to both sides we get
\begin{align}
\sum_{j=1}^{i} (r_j(x_j) -r_j(u)) \le \sum_{j=1}^{i} (r_j(x_j)-r_j(x_{j+1}))
\end{align}
Substituting for $r_j(\cdot)$,
\begin{align}
\sum_{j=1}^{i} (q_j(x_j)-q_j(u))+\sum_{j=1}^{i}(a_j,b_j)^T(x_j-u)
\le \sum_{j=1}^{i} (q_j(x_j)-q_j(x_{j+1}))+\sum_{j=1}^{i}(a_j,b_j)^T(x_j-x_{j+1})
\end{align}
and rearranging,
\begin{align}
\sum_{j=1}^{i}(a_j,b_j)^T(x_j-u)
&\le \sum_{j=1}^{i} (q_j(u)-q_j(x_{j+1}))+\sum_{j=1}^{i}(a_j,b_j)^T(x_j-x_{j+1})
\end{align}
Now $\sum_{j=1}^{i} q_j(u)=R_i(u)$.  Also, $\sum_{j=1}^{i}q_j(x_{j+1})=R_1(x_2)+\sum_{j=2}^{i}q_j(x_{j+1})\ge R_1(x_2)$ {since $q_i(x) = (\frac{\sqrt{i}}{2} - \frac{\sqrt{i-1}}{2})\|x\|^2\ge 0$ for $i=2,\dots,i$.}   Hence, $-\sum_{j=1}^{i}q_j(x_{j+1})\le -R_1(x_2)$.  It follows that,
\begin{align}
\sum_{j=1}^{i}(a_j,b_j)^T(x_j-u)
&\le R_i(u)-R_1(x_2)+\sum_{j=1}^{i}(a_j,b_j)^T(x_j-x_{j+1})
\end{align}
\end{proof}

\bibliography{bibfile}

\end{document}